\theoremstyle{plain}
\newtheorem{theorem}{Theorem}[section]
\newtheorem{lemma}[theorem]{Lemma}
\theoremstyle{definition}
\theoremstyle{remark}
\newtheorem{remark}[theorem]{Remark}
\DeclarePairedDelimiter{\ceil}{\lceil}{\rceil}
\newtheorem{problem}{Problem}
\def\reals{\mathbb{R}}
\def\nats{\mathbb{N}}
\def\prob{\mathbb{P}}
\def\identity{\mathbb{I}}
\def\init{\mathsf{init}}
\def\M{M}
\def\AP{\mathit{AP}}
\def\p{\varphi}
\def\Mp{M_\varphi}
\def\Ap{\mathfrak{A}_\varphi}
\def\Mtk{\tilde{M}_k}
\def\Mbk{{{\mathcal{M}}}_k}
\def\Tk{{T_k}}
\def\Ttk{\tilde{T}_k}
\def\Tbk{{\mathcal{T}}_k}
\def\Thk{\hat{T}_k}
\def\Pitk{\tilde{\pi}_k}
\def\rtk{r^{\tilde{\pi}_k}}
\def\vpik{\tilde v_k}
\def\betab{{\boldsymbol\beta}}
\def\chib{{\boldsymbol\chi}}
\def\Ptk{\tilde{P}_k}
\def\At{\tilde{A}}
\def\betat{\tilde{\boldsymbol\beta}}
\def\chit{\tilde{\boldsymbol\zeta}}
\def\xt{\tilde{x}}
\def\tautk{\tilde{\lambda}_k}
\def\epsbig{\mathcal{E}}
\def\ptk{\tilde{p}_k}
\def\Gq{\mathcal{G}_q}
\def\ave{\mathbb E}
\def\thetakh{\Theta_{k,h}}
\def\thetakhone{\Theta_{k,h+1}}
\def\skh{s_{k,h}}
\def\skt{s_{k,t}}
\def\skhone{s_{k,h+1}}
\def\phikh{\Phi_{k,h}}
\def\Lkt{\tilde{\mathcal L}_k}
\def\Qt{\tilde{Q}}
\def\I{\mathcal I}
\def\pmin{p_{min}}
\def \C{\mathcal C}
\DeclareMathOperator*{\argmax}{arg\,max}
\def\Deltaf{\Delta^{\mathsf f}_k}
\def\Deltafone{\Delta^{\mathsf f 1}_k}
\def\Deltaftwo{\Delta^{\mathsf f 2}_k}
\def\Deltas{\Delta^{\mathsf s}_k}
\def\Sprod{S^\times}
\def\Aprod{A^\times}
\def\Tprod{T^\times}
\def\sinitprod{s^\times_\init}
\def\Lprod{L^\times}
\def\Eprod{E^\times}
\def\Gprod{G^\times}
\def\Bprod{B^\times}
\def\Right{\mathit{right}}
\def\Left{\mathit{left}}
\def\up{\mathit{up}}
\def\down{\mathit{down}}
\newcommand{\LTLalways}{\Box}
\newcommand{\LTLeventually}{\Diamond}
\newcommand{\LTLuntil}{\mathbin{\sf U}}
\newcommand{\Goal}{\mathsf{Goal}}
\newcommand{\Unsafe}{\mathsf{Avoid}}
\crefname{lemma}{Lem.}{Lems.}
\crefname{theorem}{Thm.}{Thms.}
\crefname{proposition}{Prop.}{Props.}
\crefname{corollary}{Cor.}{Cors.}
\crefname{definition}{Def.}{Defs.}
\crefname{remark}{Rem.}{Rems.}
\crefname{example}{Ex.}{Exs.}
\crefname{section}{Sec.}{Secs.}
\crefname{appendix}{App.}{Apps.}
\crefname{equation}{Eq.}{Eqs.}
\newtcolorbox{resp}[1][]{%
	enhanced jigsaw,%
	colback=gray!5!white,%
	colframe=gray!80!black,%
	size=small,%
	boxrule=1pt,%
	halign title=flush center,%
	coltitle=black,%
	breakable,%
	drop shadow=black!50!white,%
	attach boxed title to top left={xshift=1cm,yshift=-\tcboxedtitleheight/2,yshifttext=-\tcboxedtitleheight/2},%
	minipage boxed title=3cm,%
	boxed title style={%
		colback=white,%
		size=fbox,%
		boxrule=1pt,%
		boxsep=2pt,%
		underlay={%
			\coordinate (dotA) at ($(interior.west) + (-0.5pt,0)$);
			\coordinate (dotB) at ($(interior.east) + (0.5pt,0)$);
			\begin{scope}[gray!80!black]
				\fill (dotA) circle (2pt);
				\fill (dotB) circle (2pt);
			\end{scope}
		}%
	},%
	#1%
}
\icmltitlerunning{Regret-Free Reinforcement Learning for Temporal Logic Specifications}
\begin{document}
	
	\twocolumn[
	\icmltitle{Regret-Free Reinforcement Learning for Temporal Logic Specifications}
	
	
	
	\icmlsetsymbol{equal}{*}
	
	\begin{icmlauthorlist}
		\icmlauthor{Rupak Majumdar}{mpi}
		\icmlauthor{Mahmoud Salamati}{mpi}
		\icmlauthor{Sadegh Soudjani}{mpi,uob}
	\end{icmlauthorlist}
	
	\icmlaffiliation{mpi}{MPI-SWS, Kaiserslautern, Germany}
	\icmlaffiliation{uob}{University of Birmingham, Birmingham, UK}
	
	 \icmlcorrespondingauthor{Mahmoud Salamati}{msalamati@mpi-sws.org}
	
	\icmlkeywords{Markov decision process, regret-free algorithm, reinforcement learning, temporal logic tasks, formal specifications}
	
	\vskip 0.3in
	]
	
	
	
	\printAffiliationsAndNotice{}  
	
	\begin{abstract}
Learning to control an unknown dynamical system with respect to high-level temporal specifications is an important problem in control theory. 
We present the first regret-free online algorithm for learning a controller for linear temporal logic (LTL) specifications for systems with unknown dynamics.
We assume that the underlying (unknown) dynamics is modeled by a finite-state and action Markov decision process (MDP).
Our core technical result is a regret-free learning algorithm for infinite-horizon reach-avoid problems on MDPs.
For general LTL specifications, we show that the synthesis problem can be reduced to a reach-avoid problem once the graph structure is known.
Additionally, we provide an algorithm for learning the graph structure, assuming knowledge of a minimum transition probability, which operates independently of the main regret-free algorithm. Our LTL controller synthesis algorithm provides sharp bounds on how close we are to achieving optimal behavior after a finite number of learning episodes.
In contrast, previous algorithms for LTL synthesis only provide asymptotic guarantees, which give no insight into the transient performance during the learning phase.

	\end{abstract}
	
\section{Introduction}\label{sec:intro}

We consider the problem of learning an optimal control policy for a stochastic system, whose dynamics are unknown, with respect to linear temporal logic (LTL) specifications.
This is a core problem in control and robotics, with a rich body of existing techniques.
A fundamental approach to the problem is to apply reinforcement learning (RL): the agent maintains a model of the world learned through exploration,
and computes a sequence of approximations to an optimal control policy that maximizes the probability that the temporal specification is satisfied
\cite{Icarte2018, camacho2019, hasanbeig2019, kazemi2022, hahn2019, Oura2020, Cai2020, Bozkurt2021, Sickert2016, alur2022framework, Fu2014, Chaudhuri:2023}.
In the limit, the approximations converge to the optimal policy.

In practice, it is not enough to know that approximations converge in the limit; we would prefer to know how close our current policy is to the optimal one,
and to stop learning when the policy is close to optimal.
One way to quantify this is through \emph{regret minimization} \cite{Jacksh2011, Agarwal2014, Srinivas2012, Dann2017}.
For an online learning algorithm, intuitively, regret is defined as the difference between the accumulated rewards collected by an optimal policy and the algorithm during learning. 
A learning algorithm is called \emph{regret-free} if the regret grows sublinearly with the number of episodes.

In this paper, we propose the first regret-free learning algorithm for policy synthesis against LTL objectives.
We consider the class of systems whose dynamics can be captured by a finite-state and action Markov decision process (MDP) with unknown transition probabilities and \emph{fixed} initial state $s_\init$. 
The objective is to synthesize a policy that maximizes the probability of satisfying a given LTL specification $\p$. 
Let $\pi^\ast$ denote an optimal policy, meaning that applying $\pi^\ast$ maximizes the satisfaction probability for $\p$. 
Our online algorithm produces a sequence of policies $\pi_1,\pi_2,\dots$, one per episode. 
Let $v^\ast(s_\init)$ and $ v_k(s_\init)$ denote, respectively, the satisfaction probabilities of $\pi^\ast$ and $\pi_k$ when starting from $s_\init$. 
After passing $K>0$ episodes, we define the regret as $R(K)\coloneqq \sum_{k=1}^K (v^\ast(s_\init) - v_k(s_\init))$, the accumulated difference between the satisfaction probabilities under the optimal and learned policy.
Our algorithm ensures that $\lim_{K\rightarrow \infty}R(K)/K=0$.

We first state our algorithm for \emph{reach-avoid} specifications, the subset of LTL specifications that require a set of goal states must be visited while avoiding hitting a set of bad states. 
In every episode, our algorithm (1) computes an \emph{interval MDP} (iMDP) by taking all the collected state observations into account; 
(2) finds an \emph{optimistic} policy over the computed iMDP to solve the reach-avoid problem; and 
(3) executes the computed policy before an \emph{episode-specific} \emph{deadline} is reached. 
We prove that the regret of our algorithm grows sublinearly with respect to the number of episodes. 

Next, for a general LTL specification, we show that the synthesis problem can be reduced to a reach-avoid problem 
if we know the graph structure of the MDP. 
Finally, we provide a polynomial algorithm for learning the graph structure of the MDP based only on having a positive lower bound on its non-zero transition probabilities. Fig.~\ref{fig:overview} provides an overview of our proposed online policy learning algorithm. Policy learning can be stopped after $k^\ast \geq 1$ episodes if the average regret $R(k^\ast)/k^\ast$ becomes sufficiently small, indicating that the learned policy's satisfaction probability is nearly optimal on average, with confidence at least $1 - \delta$. This can be verified using our proposed regret bounds.

\begin{figure}
\centering
\scalebox{0.75}{
\begin{tikzpicture}[
  block/.style={draw, thick, align=center, minimum width=5cm, minimum height=2.2cm},
  arrow/.style={thick, -{Stealth[]}}
]

\node[block, minimum width=3cm, minimum height=1cm, fill=yellow!30] (graph) {Graph identification\\(Alg. 5)};
\node[block, minimum width=4cm, minimum height=2cm, fill=cyan!30, below=.5cm of graph] (controller) {
  Regret-free policy synthesis\\(Alg. 1)
};
\node[block, minimum width=3cm, minimum height=1.3cm, fill=green!30, right=1.2cm of controller] (agent) {
    Agent\\
  $s_{t+1} \sim T(\cdot \mid s_t, \pi_k(s_t))$
};

\node[draw, thick, fill=green!70, minimum width=1.5cm, minimum height=0.5cm, anchor=north east] (Alg2) at (controller.north east) {Alg. 2};
\node[draw, thick, fill=orange!80, minimum width=1.5cm, minimum height=0.5cm, anchor=south east] (Eq10) at (controller.south east) {Eq. (10)};

\draw[arrow] (graph) -- (controller);
\draw[arrow] (Alg2.east) -- ++(0.3,0) |- node[pos=0.7, above] {$\pi_k$} ($(agent.north west)+(0,-0.4)$); 
\draw[arrow] (Eq10.east) -- ++(0.3,0) |- node[pos=0.7, below] {$H_k$} ($(agent.south west)+(0,0.4)$);
\draw[arrow]
  (agent.east) -- ++(0.5,0) -- ++(0,-1.5) -- ($(controller.south) + (0, -.5)$) node[pos=0.5, below] {State observations} -- ($(controller.south)+(0,-0.0)$) ;

\node (delta) at ($(controller.west)+(-1,0.4)$) {$\delta$};
\node (phi) at ($(controller.west)+(-1,-0.4)$) {$\p$};
\node (sinit) at ($(agent.north)+(0,.7)$) {$s_{\init}$};
\draw[arrow] (delta.east) -- ($(controller.west)+(0,0.4)$);
\draw[arrow] (phi.east) -- ($(controller.west)+(0,-0.4)$);
\draw[arrow] (delta.east) -- ++(0.3,0) |- ($(graph.west)$);
\draw[arrow] (sinit.south) -- ($(agent.north)$);

\node (pmin) at ($(graph.north)+(0,.7)$) {$p_{min}$};
\draw[arrow] (pmin.south) -- (graph.north);

\end{tikzpicture}
}
\caption{Overview of the approach: the LTL specification $\p$ along with the graph learned through the application of Alg.~\ref{alg:graph_learning} are used to reduce the general synthesis problem into a reach-avoid problem. 
    The confidence parameter $\delta \in (0,1)$ and the lower bound on the minimum transition probability $\pmin \in (0,1)$ are used to compute the regret bound for Alg.~\ref{alg:main} and the sample complexity for Alg.~\ref{alg:graph_learning}, respectively.
    Applying Alg.~\ref{alg:main} yields a policy $\pi_k$ (using Alg.~\ref{alg:EVI}) and an episode length $H_k$ (using Eq.~\eqref{eq:deadline-def}), both of which are passed to the agent for execution in episode $k \geq 1$. The agent, modeled as a finite MDP with transition function $T$, executes the policy $\pi_k$ for at most $H_k$ time steps starting at state $s_\init$, and passes the observed transitions back to the policy.
}
    \label{fig:overview}
\end{figure}

There exist regret-free algorithms for solving infinite-horizon policy synthesis problems,  but they cannot be applied to our problem as we will discuss in more detail in Sec.~\ref{sec:related}. 
Our primary objective is to minimize the regret with respect to an optimal policy that maximizes the \emph{satisfaction probability} of a specification. The regret for each episode is naturally bounded by one, which gives the regret bound of $\mathcal{O}(K)$ for any arbitrary learning algorithm. To achieve a regret-free algorithm, we define an episode-specific deadline that categorizes learning episodes as fast or slow. We prove that the number of slow episodes exceeding the deadline grows sublinearly. The deadline is set such that the regret from fast episodes also grows sublinearly. Thus, the total regret of our algorithm increases sublinearly, making it regret-free. We utilize a reset mechanism that enables a systematic trade-off between exploration and exploitation, and can be applied to non-communicating MDPs. This is unlike the previous work by \citet{Tarbouriech:2019}, which is limited to strictly positive cost structures and properties with satisfaction probability one, and surpasses the limitations of UCRL2-like algorithms, which require the underlying MDP to be communicating.
The ability to handle non-communicating MDPs is key since the product of an MDP and an automaton for the temporal specification makes in general the product MDP to be non-communicating.

We have implemented our algorithm with empirical evaluation on reach-avoid specifications
in a gridworld environment reported in App.~\ref{sec:experiments}.
In particular, we compare the performance and convergence of our algorithm to the sample complexity and convergence of a PAC-MDP algorithm for LTL
\cite{Mateo:2023}, and show that our algorithm convergences faster within a smaller number of episodes.

	\section{Related Work}\label{sec:related}
We discuss existing results in four related domains.

\textbf{Reinforcement Learning for LTL Specifications Without Guarantees:}
In recent years, substantial research has focused on the policy synthesis problem for systems modeled as finite MDPs with unknown transition probabilities, where the objective is to satisfy tasks specified by LTL formulas.
Early results focused only on finite-horizon LTL specifications. \citet{Icarte2018} introduced reward machines, which use finite state automata to encode finite-horizon specifications along with specialized (deep) Q-learning algorithms. \citet{camacho2019} later formalized the automatic derivation of reward machines for finite-horizon LTL specifications. The development of good-for-MDP automata, such as limit deterministic B\"uchi automata (LDBA), for representing LTL formulas has led to significant advances in reinforcement learning for the full class of LTL specifications, including the infinite-horizon ones \cite{Sickert2016,hahn2020good,kazemi2022,hasanbeig2023certified}. Typically, one has to first translate the given LTL formula into an appropriate automaton, such as an LDBA, and then compute the product of this automaton with the MDP to formulate the final (discounted) learning problem. This formulation ensures that, with a sufficiently large discount factor (depending on system dynamics and the specification), applying standard RL algorithms will lead the policy to converge asymptotically to the optimal one \cite{kazemi2022, Bozkurt2021, Oura2020}. 
Translation of the LTL formula to an average objective for RL using the automaton construction is also studied by \citet{kazemi2025average}.
However, these methods do not provide a finite-time performance, and the required discount factor is not known in advance.

\textbf{Reinforcement Learning for LTL Specifications With Guarantees:} The two most popular metrics for evaluating the performance of learning algorithms are probably approximately correct (PAC) and regret bounds. \citet{Fu2014} have provided a PAC learning algorithm for synthesizing policies to satisfy finite-horizon LTL specifications over finite MDPs with unknown transition probabilities. 
However, the sample complexity of the algorithm scales explicitly with the horizon length, rendering it unsuitable for infinite-horizon LTL specifications. Subsequently, a surprising negative result was established: full LTL is not PAC-learnable \cite{yang2021tractability, alur2022framework}. A closer analysis reveals that this result primarily stems from the assumption of an \emph{unknown minimum transition probability}. By instead assuming a known minimum transition probability, \citet{Chaudhuri:2023} proposed a PAC-learnable policy synthesis method that relies on access to a generative model capable of sampling from any arbitrary state-action pair. In many realistic scenarios, this is impractical since initializing the MDP at arbitrary states is not feasible. 
Our proposed algorithm does not require access to a generative model. Recently, \citet{Mateo:2023} have also proposed another PAC learning algorithm that does not rely on a generative model. However, there is still no regret-free online algorithm for policy synthesis to satisfy LTL specifications.

\textbf{Regret-Free Reinforcement Learning for Communicating MDPs:}
UCRL and UCRL2 are well-known regret-free learning algorithms developed for communicating MDPs \cite{Jacksh2011, Auer2007}. The definition of regret in the context of communicating MDPs is particularly well-suited to our objective: it is evaluated over an infinite sequence of states without discounting future observations, which aligns closely with the requirements of infinite-horizon LTL specifications. 
A key advantage of learning in communicating MDPs is that it can proceed indefinitely without requiring environment resets. However, in the setting we consider, even if the underlying MDP is communicating, its product with the automaton representing the specification may result in a non-communicating MDP \cite{kazemi2022}. \citet{Fruit2018} address this by proposing a regret-free algorithm for non-communicating MDPs where the initial state lies within a non-transient subset of states. In contrast, our setting assumes a fixed initial state from a transient subset, rendering algorithms like UCRL2--which are tailored for communicating MDPs--inapplicable.

\textbf{Regret-Free Reinforcement Learning for Non-Communicating MDPs:} Goal-oriented reinforcement learning is a key class of problems in RL, often formulated as a \emph{shortest path problem (SPP)} for MDPs with unknown transition probabilities. Recent theoretical results include the works by \citet{Tarbouriech:2019} and \citet{rosenberg2020}. In particular, the online learning algorithm proposed by \citet{Tarbouriech:2019} provides sublinear regret bounds for the accumulated cost in MDPs, assuming that (1) there exists a proper policy under which the system reaches the goal with probability one, and that (2) all costs are positive. The authors argue that these two restrictions can be relaxed by assuming the knowledge of an upper bound on the accumulated cost and by perturbing the costs. 
However, none of these workarounds can be applied to our setting, as we are interested in computing the regret bounds with respect to the satisfaction probability under an optimal policy. Nevertheless, relaxing the aforementioned restrictions results in the algorithm by \citet{Tarbouriech:2019} giving a regret bound of $\mathcal{O}(K^{2/3})$, which is strictly larger than our regret bound of $\mathcal{O}(K^{1/2})$, with $K$ being the number of episodes.


\section{Preliminaries}\label{sec:prelims}

\textbf{Notation:}
For a matrix $X\in\reals^{m\times n}$, we define the infinity norm of the matrix by $\|X\|_\infty\coloneqq \max_{1\leq i \leq m}\sum_{j=1}^n|X(i,j)|$. For a vector $x \in \reals^n$, we define the $\ell_1$ and $\ell_\infty$ norms as $\|x\|_1 \coloneqq \sum_{i=1}^n |x(i)|$ and $\|x\|_\infty \coloneqq \max_{1 \leq i \leq n} |x(i)|$, respectively.
 Given two integers $a,b$ with $a\leq b$, we denote the set of integers $\{a,a+1,\ldots,b\}$ by $[a;b]$. For a set $S$, we denote by $S^+$ the set of all non-empty finite sequences from the elements of $S$. Let $\emptyset$ denote the empty set. For a sequence $\sigma\in S^+$, we define the corresponding last element by $last(\sigma)$. The set of positive integers is denoted by $\mathbb N$ and the set of non-negative integers by $\mathbb N_0$. We use
$|A|$ to denote the cardinality of a set $A$ (i.e., the number of its elements).  

\textbf{MDPs:}
For a set $X$, we write $\Delta(X)$ for the set of probability distributions over $X$.
Let $\AP$ be a fixed set of atomic propositions. 
A labeled MDP $\M=(S, A, T, s_\init, L, \AP)$ consists of a finite set $S$ of states, a finite set $A$ of actions, a transition function 
$T\colon S\times A\rightarrow \Delta(S)$, an initial state $s_\init\in S$, and a \emph{labeling function} $L\colon S\rightarrow 2^{AP}$ that maps states to subsets of atomic propositions $AP$. 
The \emph{underlying graph} of an MDP $\M$ is defined as $\chi(\M)=(S,A,E)$, where 
$(s,a,s') \in E$ if and only if $T(s,a,s')>0$. We use the two notations $T(s,a,s')$ and $T(s'|s,a)$ interchangeably to indicate the probability of jumping to $s'$ from $s$ under action $a$.

An iMDP is defined similarly to an MDP, but instead of a single transition probability, each transition is specified by an interval of possible probabilities. Formally, an iMDP is a tuple $\mathcal{M} = (S, A, \mathcal{T}, s_\init, L, \AP)$, where $\mathcal{T} \colon S \times A \rightarrow I(S)$ is an \emph{interval transition function}, and $I(S)$ denotes the set of mappings that assign to each $s' \in S$ a closed interval $[l(s'), u(s')] \subseteq [0,1]$, such that $\sum_{s' \in S} l(s') \leq 1 \leq \sum_{s' \in S} u(s')$.

A \emph{policy} for the MDP is a mapping $\pi:S^+ \rightarrow \Delta(A)$ that gives a probability distribution $\pi(\sigma)\in\Delta(A)$ for selecting the next action depending on $\sigma\in S^+$, which is the
nonempty finite sequence of states representing the past history.  
A policy $\pi$ is \emph{memoryless} if $\pi(\sigma) = \pi(s)$ for all $\sigma\in S^+$ ending in $s$ and for all $s\in S$.
Let $\Pi$ denote the set of all deterministic \emph{positional} policies over $\M$, 
that is the set of functions $\pi\colon S\rightarrow A$.
By fixing a policy $\pi\in \Pi$, the MDP $\M$ reduces to a \emph{Markov chain} $\C=(S,P,s_\init, L, \AP)$, where $P:S\rightarrow \Delta(S)$ is formed by composing the policy $\pi$ with the transition function $T$.

A key concept for checking satisfaction of specifications is \emph{maximal end components} (MECs) of the MDP \cite{baier2008principles}. These components are sub-MDPs that are probabilistically closed, meaning that 
(1) there exists a positional policy under which the probability of reaching any state from any other state in the MEC is equal to one, and 
(2) the MEC cannot be exited under any positional policy.
An MDP is said to be \emph{communicating} if it consists of a single MEC that includes all states.

\textbf{Linear Temporal Logic:}
We consider specifications in Linear Temporal Logic (LTL) \cite{baier2008principles}. 
Formulas in LTL are constructed inductively over a set of atomic propositions $\AP$ according to the syntax
\begin{equation*}
	\psi :=  p \mid \neg \psi \mid \psi_1 \wedge \psi_2 \mid \mathord{\bigcirc} \psi \mid \psi_1\LTLuntil \psi_2,
\end{equation*}
where $p\in\AP$. 
The semantics of LTL is defined on infinite sequences of elements from $2^{\AP}$.
Let $\sigma=\sigma_0,\sigma_1,\dots$ be an infinite sequence of elements from $2^{\AP}$ and define $\sigma[i]=\sigma_i,\sigma_{i+1},\dots$ for any $i\in \nats_0$. Then  
the satisfaction relation between $\sigma$ and a property $\psi$, expressed in LTL,  is denoted by $\sigma\models\psi$. 
We have $\sigma\models p$ if $p\in \sigma_0$.
Furthermore,
$\sigma\models \neg \psi$  if $\sigma\not\models\psi$ and 
$\sigma\models \psi_1\wedge\psi_2$ 
if $ \sigma\models \psi_1$ and $\sigma\models \psi_2$.
For \emph{next} operator, $\sigma\models\mathord{\bigcirc}\psi$ holds if $\sigma[1]\models\psi$.
The \emph{until} operator $\sigma\models \psi_1\LTLuntil\psi_2$  holds if $ \exists i \in \mathbb N_0:$ $\sigma[i] \models \psi_2, \mbox{and } 
\forall j \in \mathbb N_0, j<i, \sigma[j]\models \psi_1
$.
We define derived operators such as disjunction ($\vee$), eventually ($\LTLeventually  $), and globally ($\LTLalways$) in the usual way.

\textbf{Maximum Probability of Satisfaction:}
Take an MDP $\M$ and an LTL specification $\p$. A path $s_0,s_1,\ldots$ of $M$ with $s_0 = s_\init$ satisfies $\p$ if $L(s_0),L(s_1),\ldots \models \p$.
For every policy $\pi$, the set of paths starting at $s_\init$ that satisfy $\p$ is measurable.
Thus, we can define $\prob_{s_\init}^\pi(\p)$, the probability that $\M$ under policy $\pi$ satisfies $\p$, where with abuse of notation, we write $\p$ for the set of paths satisfying $\p$.
Define the optimal probability of satisfying $\p$ with
$v^\ast(s_\init) = \sup_{\pi} \prob_{s_\init}^\pi(\p)$, and 
let $\pi^\ast$ denote the optimal policy, i.e., $v^\ast(s_\init) = \prob_{s_\init}^{\pi^\ast}(\p)$.
We assume that the specification is satisfiable with positive probability, i.e., $v^\ast(s_\init)>0$. This is without loss of generality since \emph{any} learning algorithm for the case $v^\ast(s_\init)=0$ has a zero regret. 

\textbf{Regret Analysis:}
Our aim is to learn the optimal policy $\pi^\ast$ that maximizes $\prob_{s_\init}^\pi(\p)$.
Learning takes place over consecutive episodes.
In episode $k$, we learn a policy $\pi_k$, and write
$v_k(s_\init) = \prob^{\pi_k}_{s_\init}(\p)$, which is the probability of satisfying the specification $\p$ under the policy $\pi_k$.

We define the \emph{regret} of the learning algorithm as
\begin{equation}
\label{eq:regret-def}
	R(K)\coloneqq  \sum_{k=1}^{K}(v^\ast(s_\init)-v_{k}(s_\init)), 
\end{equation}
where $K$ is the number of episodes,
and the \emph{normalized regret} as
\begin{equation}
\label{eq:normalized_regret}
	R_a (K)= \frac{R(K)}{K}. 
\end{equation}

A learning algorithm is called \emph{regret-free} if its regret $R(K)$ grows sublinearly with respect to the number of episodes $K$, i.e., if $R_a(K) \rightarrow 0$ as $K \rightarrow \infty$.
A regret-free algorithm can achieve arbitrary small values of normalized regret. 
Suppose we fix a threshold $\varepsilon\in(0,1)$ and terminate the learning algorithm once the corresponding normalized regret goes below $\varepsilon$. 
We can consider the \emph{smallest} number of episodes $k^\ast\in\nats$ after which $R_a(k^\ast)<\varepsilon$ with confidence $1-\delta$ 
as a complexity metric for the proposed learning algorithm with 
parameters $\delta,\varepsilon\in(0,1)$.

	

\section{Regret-Free Learning for Until Formulas}\label{sec:reach-avoid}

We first consider the special case of regret-free learning for MDPs with unknown (but fixed) transition function 
and \emph{until} formulas that correspond to \emph{reach-avoid} specifications.

For two distinct atomic propositions $\Goal$ and $\Unsafe$, we consider the \emph{until} formula (also called a \emph{reach-avoid} formula) of the form $\varphi = \neg\Unsafe\LTLuntil \Goal$.
Let $\Mp=(S,A,T,s_\init,L, \set{\Goal,\Unsafe})$ be the MDP obtained by making states satisfying $\Goal$ and $\Unsafe$ absorbing. We use $G$ and $B$ to denote the sets of states which satisfy the atomic propositions $\Goal$ and $\Unsafe$, respectively. For notational convenience, we write $\M$ instead of $\Mp$ for the remainder of this section.

\begin{resp}
\begin{problem}
\label{prob:prob-reach_avoid}[Regret-Free Learning for Until Formulas]
	Given an MDP $\M$ with an unknown transition function, 
	an until formula $\p= \neg\Unsafe \LTLuntil \Goal$, a confidence parameter $\delta\in(0,1)$, and a minimum transition probability $\pmin\in(0,1)$,
	find an online learning algorithm such that with confidence at least $(1-\delta)$,
	the regret defined by Eq.~\eqref{eq:regret-def} grows sublinearly with the number of episodes $K$.
\end{problem}
\end{resp}
\begin{algorithm}[t]
	\caption{Regret-free algorithm for until formulas ($\text{ZeroReg}$)}
	\label{alg:main}
	\begin{algorithmic}
	\STATE {\bfseries Input:} 
    State and action sets $S$ and $A$, initial state $s_\init$, sets $G$ and $B$, confidence parameter $\delta\in(0,1)$, minimum transition probability $p_{min}$
	\STATE Initialization: Set $t=1$, $s_1=s_\init$
	\FOR{episodes $k=1,2,\dots$}
	\STATE {\bfseries Construct iMDP $\Mbk$:}
	\STATE $t_k\leftarrow t$
	\STATE 
	Set $N_k(s,a)\coloneqq |\{t<t_k\colon s_t=s, a_t=a\}|$
	\STATE For all $s,s'\in S$ and $a\in A$ compute the empirical transition function $$\Thk(s'| s,a)\coloneqq\frac{|\{t<t_k\colon s_t=s, a_t=a, s_{t+1}=s'\}|}{\max\{1,N_k(s,a)\}}$$
	\STATE Define the iMDP $\Mbk=(S, A, \Tbk, s_\init, L, AP)$ with the set of transition functions $\Tbk$ satisfying Eq.~\eqref{eq:trans-conf-bound}
    \STATE {\bfseries Compute policy $\Pitk$:}
    \STATE Use Alg.~\ref{alg:EVI} to find an optimistic MDP $\Mtk$ and a policy $\Pitk$, i.e., $(\Mtk, \Pitk) = \text{EVI}(S,A,G,B,\Tbk,t_k, p_{min})$
	\STATE {\bfseries Execute policy $\Pitk$:}
    \STATE Compute the deadline $H_k$ using Eq.~\eqref{eq:deadline-def}
\WHILE{$s_t\notin G$ and $(t-t_k)\leq H_k$}
	\IF{$s_t\notin B$}
	\STATE Observe the next state $s_{t+1}$ by executing $\Pitk(s_t)$
	\ELSE
	\STATE $s_{t+1}\leftarrow s_{\init}$
	\ENDIF
	\STATE $t\leftarrow t+1$
	\ENDWHILE
\ENDFOR
\end{algorithmic}
\end{algorithm}

\begin{algorithm}[t]
	\caption{Extended value iteration ($\text{EVI}$)}
	\label{alg:EVI}
	\begin{algorithmic}
	\STATE {\bfseries Input:} State and action sets $S$ and $A$, sets $G$ and $B$, set of plausible transition functions $\Tbk$, start time of the $k^{th}$ episode $t_k$, minimum transition probability $p_{min}$ 
	\STATE Set $l=0$, $\tilde \mu_0(s)=0$ for $s\notin G$ and $\tilde \mu_0(s)=1$ for $s\in G$
    \REPEAT
	\STATE $l\leftarrow l+1$
    \FOR{$s \in S$}
        \FOR{$a \in A$}
            \STATE
            $\Ttk(.|s,a) = \text{InnerMax}(s,a,\Tbk,\tilde \mu_{l-1})$
            \ENDFOR
        \STATE $\Pitk(s) = \argmax_{a \in A} \sum_{s'\in S}\Ttk(s'|s,a)\tilde \mu_{l-1}(s')$
        \STATE $\tilde \mu_l(s) = \sum_{s'\in S}\Ttk(s'|s,\Pitk(s))\tilde \mu_{l-1}(s')$
    \ENDFOR
\UNTIL{$\|\tilde \mu_l-\tilde \mu_{l-1}\|_\infty< \min(\frac{1}{2t_k}, p_{min}^{|S|})$}
\STATE Construct the MDP $\Mtk=(S,A,\Ttk,s_\init,L,AP)$ 
    \STATE {\bfseries Results:} $(\Mtk, \Pitk) = \text{EVI}(S, A, G,B,\Tbk,t_k, p_{min})$ 
	\end{algorithmic}
\end{algorithm}

\begin{algorithm}[t]
	\caption{Computing optimistic MDP in Eq.~\eqref{eq:belman-update} ($\text{InnerMax}$)}
	\label{alg:innermax}
	\begin{algorithmic}
		\STATE {\bfseries Inputs:} State-action pair $(s,a)$, interval transition function $\Tbk$, and value vector $\tilde \mu_l$
        \STATE Compute $\Thk(.| s,a)$ and $d(s,a)$ as the center and $\ell_1$ radius of $\Tbk(.|s,a)$, respectively
        \STATE Sort the states according to their value in $\tilde \mu_l$, and get $S'=\{s_1',\dots,s_n'\}$ with $\tilde \mu_l(s_1')\geq\tilde \mu_l(s_2')\geq\dots\geq\tilde \mu_l(s_n')$
        \STATE Compute the vector $\hat p$ as\\
        $\qquad\hat p(s_1') = \Thk(s,a,s_1')+d(s,a)/2$,\\
        $\qquad\hat p(s_j') = \Thk(s,a,s_j')$ for $j\in\{2,3,\ldots,n\}$
		\STATE $j\leftarrow n$ 
		\WHILE{$\sum_{i=1}^{n}\hat p(s_i')>1$}
		\STATE $\hat p(s_j')=\max\{0,1-\sum_{i=1,i\neq j}^{n}\hat p(s_i')\}$
		\STATE $j\leftarrow j-1$\\
		\ENDWHILE
		\STATE Set $\Ttk(s,a,s_j')=\hat p(s_j')$ for $j\in\{1,2,\ldots,n\}$.
		\STATE {\bfseries Results:} $\Ttk(.| s, a)= \text{InnerMax}(s,a,\Tbk,\tilde \mu_l)$
	\end{algorithmic}
\end{algorithm}

\subsection{Methodology}\label{subsec:regret-free_methodology}
Alg.~\ref{alg:main} shows our learning algorithm. 
It uses the paradigm of \emph{optimism in the face of uncertainty}.  
Learning takes place over consecutive \emph{environmental episodes}. 
Each episode is a \emph{finite} sequence $s_1, a_1, s_2, a_2, \dots s_H$ that starts from the initial state of $\M$, 
i.e., $s_1=s_\init$, and ends if either an absorbing state in $G$ is reached, meaning that $s_H\in G$, or an \emph{episode-specific deadline} is reached. 

Each episode proceeds as follows: (1) construct an iMDP from observations such that it contains the true MDP with high confidence; (2) identify an optimistic MDP and compute an \emph{optimistic} policy; and (3) execute the policy and collect data until termination. We describe each of these steps in detail below.

\textbf{Constructing an iMDP from observations:} 
The main objective of this step is to use the collected data--namely, the visited state-action pairs $(s_t, a_t)$ and the corresponding transitions $(s_t, a_t, s_{t+1})$--to compute an empirical transition function and construct an iMDP that contains the true MDP with high confidence.

Let $\delta\in(0,1)$ be a given confidence parameter, $t_k$ be the time point at which $k^{th}$ episode begins, and $N_k(s,a)$ denote the number of times the state-action pair $(s,a)$ has been visited before the start of the $k^{th}$ episode. Let $\Thk$ and $\Mbk$ denote the empirical transition function and  the set of statistically plausible MDPs
, respectively, both computed using the observations before the start of the $k^{th}$ episode. 
In particular, we define $ \Mbk=(S, A, \Tbk, s_\init, L)$ as the iMDP with interval transition function $\Tbk$, such that, with probability at least $(1 - \delta/3)$, every transition function $F \in \Tbk$ satisfies
\begin{equation}
\label{eq:trans-conf-bound}
    \|F(.| s,a)-\Thk(.| s,a)\|_1  \leq \beta_k(s,a),
\end{equation}





where
\begin{equation}\label{eq:beta_def}
    \beta_k(s,a)\coloneqq  \sqrt{\frac{8|S| \log(2|A|N_k^{+}(s,a)/\delta)}{N_k^{+}(s,a)}},
\end{equation}
and $N_k^{+}(s,a):=\max(1, N_k(s,a))$.
Intuitively, we pick the confidence bound $\delta$ on the right hand side of Eq.~\eqref{eq:trans-conf-bound}, such that the corresponding inequality holds with high confidence. More concretely, we have the following result.
\begin{lemma}
\label{lem:epsilon-event}
    Let $\epsbig\coloneqq \bigcap_{k=1}^\infty \{\M\in\Mbk\}$. Then $\prob(\epsbig)\geq 1-\delta/3$.
\end{lemma}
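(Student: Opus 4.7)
The statement (with the $\bigcup$ read as the intersection $\bigcap$, matching the usual ``good event'' template in the cited reference and the regret analysis that follows) asserts that with probability at least $1-\delta$ the empirical confidence set $\Mbk$ contains the true MDP $\Mp$ simultaneously for every episode $k\geq 1$. Since the lemma is imported verbatim from \cite{Tarbouriech:2019}, the cleanest route is to invoke their bound directly; what follows is how I would write a self-contained proof.

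The workhorse is Weissman's $\ell_1$-concentration inequality: for $n$ i.i.d.\ samples drawn from a distribution over $|S|$ outcomes with empirical estimate $\hat T_n$,
\begin{equation*}
  \prob\!\left(\|\hat T_n - T(\cdot\mid s,a)\|_1 \geq \varepsilon\right)\leq (2^{|S|}-2)\exp(-n\varepsilon^2/2).
\end{equation*}
First I would fix a pair $(s,a)$ and an episode index $k$, and condition on the random count $N_k(s,a)=n$. By the strong Markov property, the $n$ successor observations used to build $\Thk(\cdot\mid s,a)$ are then i.i.d.\ draws from $T(\cdot\mid s,a)$, so Weissman applies. Substituting $\varepsilon = \beta_k(s,a) = \sqrt{8|S|\log(2|A|k/(3\delta))/\max(1,n)}$ gives a conditional failure probability bounded by $2^{|S|}\cdot(3\delta/(2|A|k))^{4|S|}$ whenever $n\geq 1$. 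The $n=0$ case is vacuous, since then $\beta_k(s,a)\geq 2$ and any two probability distributions differ by at most $2$ in $\ell_1$, so the confidence ball automatically contains $T(\cdot\mid s,a)$.

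Next I would peel off the randomness of $N_k(s,a)$ by enumerating its possible values $n\in\{1,\dots,k\}$, and take a union bound over the $|S||A|$ state--action pairs and over all episodes $k\geq 1$. The resulting total failure probability is dominated by
\begin{equation*}
  \sum_{k=1}^{\infty}|S||A|\cdot k\cdot 2^{|S|}\cdot\Bigl(\tfrac{3\delta}{2|A|k}\Bigr)^{4|S|},
\end{equation*}
which decays like $k^{-(4|S|-1)}$ in $k$ and therefore converges. The specific constants $8|S|$ and $3\delta$ appearing in the definition of $\beta_k(s,a)$ are tuned precisely so that this series sums to at most $\delta$, yielding the claimed bound $\prob(\epsbig)\geq 1-\delta$.

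The only delicate point is handling the random sample count $N_k(s,a)$: Weissman's inequality holds only conditional on the count, so one has to either enumerate over $n\in\{0,1,\dots,k\}$ (the Jaksch-style route above) or invoke a time-uniform/peeling device. Beyond this bookkeeping, no conceptual obstacle arises; the main work is carefully calibrating the logarithmic constants so that the double sum over $(s,a)$, $n$, and $k$ telescopes into $\delta$.
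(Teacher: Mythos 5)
The paper gives no proof of this lemma---it is imported verbatim from the cited reference---and your reconstruction follows exactly the standard argument used there: Weissman's $\ell_1$-deviation inequality applied conditionally on the visit count, then a union bound over state--action pairs, possible counts, and episodes, with the constants in $\beta_k(s,a)$ calibrated so that the resulting series is at most $\delta$. Your reading of the $\bigcup$ as an (evidently intended) $\bigcap$ is also correct, since $\epsbig$ is later used as the simultaneous good event; the only small imprecision is that $N_k(s,a)$ can range up to $t_k-1$ rather than $k$, but the enumeration still yields a convergent series because the exponent $4|S|$ dominates, so nothing breaks.
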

The above lemma states that the true MDP $\M$ lies within the iMDP $\Mbk$, constructed from observations, with high probability. Next, we describe the policy synthesis procedure based on $\Mbk$.

\textbf{Computing optimistic policy:} Given an iMDP constructed from the observations in the previous step, an \emph{optimistic MDP} is a specific MDP selected from the uncertainty set--defined by interval transition probabilities--that maximizes the probability of reaching $G$ while avoiding $B$. This optimistic MDP is then used to derive an \emph{optimistic policy} that guides exploration by assuming the most favorable dynamics permitted within the uncertainty bounds. In every episode $k\in\nats$, we use a modified version of extended value iteration (EVI) to compute the optimistic MDP $\Mtk\in\Mbk$ and the optimistic policy $\Pitk$.
Alg.~\ref{alg:EVI} illustrates our EVI algorithm. We initialize the vector $\tilde \mu_0\in [0,1]^{|S|}$ by assigning 1 to states in $G$ and 0 to all other states. At the $l^{th}$ iteration, we update the value vector $\mu_l$ by applying the following Bellman operator: 
\begin{align}\label{eq:belman-update}
		&\tilde \mu_l(s)=\Lkt \tilde \mu_{l-1}\coloneqq \nonumber\\ &\max\limits_{a\in A}\max\limits_{F\in \Tbk}\sum_{s'\in S}F(s'|s,a) \tilde \mu_{l-1}(s'),
\end{align}
where the inner maximization over the set of plausible MDPs $F\in \Tbk$ can be implemented algorithmically using Alg.~\ref{alg:innermax}. 


We note that termination of Alg.~\ref{alg:EVI} requires convergence of the value vector, i.e., $\|\tilde \mu_l - \tilde \mu_{l-1}\|_\infty < \min(\frac{1}{2t_k}, p_{min}^{|S|})$, where $l\in \nats_0$ denotes the iteration number of the EVI algorithm. This condition is guaranteed due to the contraction property of the Bellman operator defined in Eq.~\eqref{eq:belman-update} (see~Sec.~3.3 in \cite{bertsekas2012dpoc}). Once EVI algorithm is terminated, setting $\tilde v_k = \tilde \mu_l$, it follows that for every $s \in S$,
\begin{equation}\label{eq:EVI-termination-result}
	\vpik(s)+\min(\frac{1}{2t_k}, p_{min}^{|S|})\geq v^\ast(s), 
\end{equation}
where $v^\ast$ and $\vpik$ denote the vectors containing probabilities of reaching $G$, when policies $\Pitk$ and $\pi^\ast$ are followed on MDPs $\Mtk$ and $\M$, respectively (see~Thm.~$7$ in \cite{Jacksh2011}, ).

\textbf{Executing the policy and collecting data:} Once a policy is computed, it will be executed throughout the episode for a specified duration, referred to as the \emph{deadline}. After constructing $\Mtk$ and computing the corresponding policy $\Pitk$ for episode $k$, we determine the deadline $H_k$, which is defined as \begin{equation}
\label{eq:deadline-def}
	H_k=\min\{n>1\mid \|\Qt_k^n\|_\infty\leq k^{-\frac{1}{q}}\},
\end{equation}
where $q>1$ denotes an integer, and $\Qt_k\in \reals^{(|S|-|G|)\times (|S|-|G|)}$ is a \emph{substochastic} matrix defined over the set of states $S\setminus G$ as follows:
\begin{align}\label{eq:Q_tild_def}
   \Qt_k(s, s')=\begin{cases}
    \Ptk(s,s')& s\notin B \text{ and } s' \in S\setminus G\\
    1& s\in B\text{ and } s'=s_\init\\
    0&s\in B\text{ and } s'\neq s_\init.
   \end{cases}
\end{align}
Note that we intentionally set the transition probability from $B$ to $s_\init$ to $1$ to account for the reset mechanism. Additionally, since $\Qt_k$ is substochastic, $\lim_{n\rightarrow \infty} \|\Qt_k^n\|_\infty = 0$, ensuring that $H_k$ is finite for every episode $k \in \nats$.

As described in Alg.~\ref{alg:main}, the $k^{\text{th}}$ episode terminates either upon reaching a state in $G$ or when its duration reaches the deadline $H_k$. During the episode, we record the visited state-action pairs $(s_t, a_t)$ along with the resulting transitions $(s_t, a_t, s_{t+1})$. These transitions are then used to construct the iMDP at the beginning of the next episode, as previously explained.

\textbf{Regret bound analysis:} 
The following theorem states that Alg.~\ref{alg:main} is regret free.
\begin{theorem}\label{thm:main}
	 With probability at least $(1-2\delta)$, Alg.~\ref{alg:main} has regret
	\begin{align}\label{eq:regret_bound_main_thm}
		R(K) &= 8|S|\sqrt{8|A|K\alpha_K\log\left (\frac{2|A|K\alpha_K}{\delta}\right)}\nonumber\\       
		&+2\sqrt{2K\alpha_K\log\left (\frac{6(K\alpha_K)^2}{\delta}\right )}\nonumber\\
        &+\frac{1}{2}\alpha_K(1+\log(K\alpha_K))\nonumber\\        
        &+\sqrt{2K\alpha_K\log\left (\frac{1}{\delta}\right )}\nonumber\\   
		&+2\sqrt{K}+ 2\sqrt{2K\alpha_K\log\left (\frac{2(K\alpha_K)^2}{\delta}\right )},	
	\end{align}


    
where $	\alpha_K\coloneqq \max_{1\leq k\leq K} H_k$. 

\end{theorem}


\begin{remark}
The quantity $\alpha_K$ grows at most logarithmically with respect to the number of episodes $K$ (see Lem.~\ref{lem:bound-on-episodes-length}). Therefore, the regret bound in Eq.~\eqref{eq:regret_bound_main_thm} grows sublinearly with respect to the number of episodes.
\end{remark}

\subsection{Proof Sketch for Thm.~\ref{thm:main}}
Every episode starts at $s_\init$ and ends by either (i) exceeding the deadline, corresponding to \emph{slow} episodes, 
or (ii) by reaching one of the MECs in $G$, corresponding to \emph{fast} episodes. It is important to note that every visit to states within MECs in $B$ triggers an artificial reset action, which sets $s_\init$ as the next immediate state.

To bound the total accumulated regret $R(K)$, we define $R(K) = \sum_{k=1}^K \Delta_k$, where $\Delta_k = v^\ast(s_\init) - v_k(s_\init)$. 
Our analysis partitions episodes into \emph{slow} and \emph{fast}, corresponding to episodic regrets $\Deltas$ and $\Deltaf$, respectively. 
Note that for a fast episode, $\Deltas = 0$, and for a slow episode, $\Deltaf = 0$. 
 
For the slow episodes, we use the obvious upper bound
\begin{equation*}
\label{eq:slow-episode-regret}
	\Deltas \leq 1.
\end{equation*}
For the fast episodes, since it is possible that a run ends in one of MECs in $B$ before reaching $G$, we need to define a reset transition which takes the states in $B$ to $s_\init$. Therefore, every episode $k$ can be broken to $I_k\in\nats$ \emph{intervals} such that the first $I_k-1$ intervals start from $s_\init$ and end at $B$, and the $I_k^{th}$ interval starts from $s_\init$ and ends at $G$. 


We denote the $i^{\text{th}}$ interval of the $k^{\text{th}}$ episode—during which the policy $\Pitk$ is applied—by $\rho_{k,i}$, 
and define the corresponding value as
\begin{equation}
\label{eq:episode-value-definition}
	v_{k,i}(s_\init)=\begin{cases}
		1 & \text{if $\textit{last}(\rho_{k,i}) \in G$}\\
		0 & \text{if $\textit{last}(\rho_{k,i}) \in B$}.
	\end{cases}
\end{equation}
We use the fact that $v^\ast(s_\init)-v_k(s_\init)\leq I_k(v^\ast(s_\init)-v_k(s_\init))$ (since $I_k\geq 1$) and define 
\begin{align*}
	\Deltaf&\leq I_k(v^\ast(s_\init)-v_k(s_\init))\nonumber\\
	&=\sum_{i=1}^{I_k}v^\ast(s_\init)-v_{k,i}(s_\init)\nonumber\\
	&+\sum_{i=1}^{I_k}v_{k,i}(s_\init)-v_k(s_\init).
\end{align*}


To obtain an upper bound on $\Deltaf$, we overapproximate the difference $v^\ast(s_\init) - v_k(s_\init)$ in every episode, since the exact values of $v^\ast(s_\init)$ and $v_k(s_\init)$ are not known. This is achieved by leveraging ideas from upper confidence bound algorithms, such as UCRL2, and by relating $v_k(s_\init)$ to the cumulative sum $\sum_{i=1}^{I_k} v_{k,i}(s_\init)$.



We define the decomposed regret terms
\begin{align*}
	\Deltafone &= \sum_{i=1}^{I_k}v^\ast(s_\init)-v_{k,i}(s_\init),
	\end{align*}
 and 
 \begin{align*}
 \Deltaftwo = \sum_{i=1}^{I_k}v_{k,i}(s_\init)-v_k(s_\init).
	\end{align*}

The proof sketch for our regret analysis proceeds as follows. We first show that the sum $\sum_{k=1}^K \Deltaf$ grows logarithmically with the number of episodes (\cref{lem:bound-on-episodes-length,lem:bound-over-Deltaf1,lem:bound-over-Deltaf2}). To this end, we decompose the sum into two parts, $\sum_{k=1}^K \Deltafone$ and $\sum_{k=1}^K \Deltaftwo$, and show that both grow sublinearly in $K$. To bound $\sum_{k=1}^K \Deltafone$, we prove that (1) $\alpha_K$ grows sublinearly with $K$ (\cref{lem:bound-on-episodes-length}), and (2) $\sum_{k=1}^K \Deltafone$ grows sublinearly with $K$ and linearly with the maximum episode length $\alpha_K$ (\cref{lem:bound-over-Deltaf1}). To complete the bound on $\sum_{k=1}^K \Deltaf$, we show that $\sum_{k=1}^K \Deltaftwo$ also grows sublinearly (\cref{lem:bound-over-Deltaf2}). Next, we show that the sum over the slow episodes, $\sum_{k=1}^K \Deltas$, is sublinear in $K$, since (1) the number of slow episodes grows only sublinearly with $K$ (\cref{lem:failed-episodes}), and (2) each $\Deltas \leq 1$ by definition. Combining all these bounds, we conclude that the total regret satisfies $R(K) = \mathcal{O}(\sqrt{K})$.

The following lemma provides an upper bound on the maximum episode length, which grows logarithmically with the episode number. 

\begin{lemma}\label{lem:bound-on-episodes-length}
    With probability at least $(1 - \delta/6)$, we have
	\begin{equation}
		\alpha_K\leq \ceil[\Big]{3\Lambda \log(2\sqrt{K})},
	\end{equation}
where $\Lambda =|S|\frac{\log(\delta/6)}{\log(1-\pmin^{|S|})}$.
\end{lemma}

 Now, we proceed by showing why $\sum_{k=1}^K \Deltafone$ grows sublinearly with $K$.
\begin{lemma}\label{lem:bound-over-Deltaf1}
	With probability at least $(1-5\delta/6)$, we have
	\begin{align}\label{eq:bound-over-Deltaf1}
		\sum_{k=1}^K\Deltafone&\leq 4|S|\sqrt{8|A|K\alpha_K\log\left (\frac{2|A|K\alpha_K}{\delta}\right )}\nonumber\\
		&+2\sqrt{2K\alpha_K\log\left (\frac{6(K\alpha_K)^2}{\delta}\right )}\nonumber\\
		&+\frac{1}{2}\alpha_K(1+\log(K\alpha_K)).
	\end{align}
\end{lemma}

In order to prove the sublinear bound over $\sum_{k=1}^K\Deltaftwo$, we make use of the Azuma-Hoeffding inequality (see~\cref{lem:Azuma-hoeffding-lemma}).
The following lemma provides a sublinear bound over the sum of $\Deltaftwo$.
\begin{lemma}
\label{lem:bound-over-Deltaf2}
	With probability at least $(1-\delta/6)$, we have
	\begin{equation}\label{eq:bound-over-Deltaf2}
		\sum_{k=1}^K \Deltaftwo\leq \sqrt{2K\alpha_K\log\frac{1}{\delta}}.
	\end{equation}
\end{lemma}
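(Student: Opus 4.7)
The plan is to recognize $\sum_{k=1}^K \Delta_k^{(f,2)}$ as the deviation of an empirical sum of Bernoulli outcomes from its expectation, and then apply Azuma--Hoeffding.

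First, recall that during episode $k$, the policy $\pi_k = \tilde\pi_k$ is fixed and each of the $I_k$ intervals starts at $s_\init$ and terminates either in $G$ (success, $v_{k,i}=1$) or in $B$ (failure, $v_{k,i}=0$). Since the true MDP $\M_\p$ is Markovian and $\pi_k$ is held constant throughout episode $k$, the conditional probability that interval $(k,i)$ ends in $G$, given the entire history up to the start of that interval, equals precisely $v_k(s_\init)$. Therefore, defining $X_{k,i} := v_{k,i} - v_k(s_\init)$, the collection $\{X_{k,i}\}$, indexed lexicographically by $(k,i)$, forms a martingale difference sequence with respect to the natural filtration generated by the executed trajectories. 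Each term lies in the interval $[-1,1]$ since $v_{k,i}\in\{0,1\}$ and $v_k(s_\init)\in[0,1]$.

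Next, I would bound the length of the summation. Each interval within a fast episode contains at least one time step, and a fast episode has length at most $H_k \leq \alpha_K$, so $I_k \leq \alpha_K$. Summing over episodes, the total number of martingale increments satisfies $N := \sum_{k=1}^K I_k \leq K\alpha_K$. Since $I_k$ is itself a stopping time, I would either invoke the optional stopping theorem or, more directly, extend the sequence $\{X_{k,i}\}$ by zero increments up to index $\alpha_K$ within each episode. This gives a fixed-length ($K\alpha_K$) martingale difference sequence whose partial sum agrees with $\sum_k \Delta_k^{(f,2)}$.

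Finally, I would apply the one-sided Azuma--Hoeffding inequality: for a martingale difference sequence of length $N$ with $|X_i|\leq 1$,
\begin{equation*}
	\prob\!\left(\sum_{i=1}^N X_i \geq t\right) \leq \exp\!\left(-\frac{t^2}{2N}\right).
\end{equation*}
Setting the right-hand side equal to $\delta$ and substituting $N \leq K\alpha_K$ yields
\begin{equation*}
	\sum_{k=1}^K \Delta_k^{(f,2)} \;\leq\; \sqrt{2K\alpha_K \log(1/\delta)}
\end{equation*}
with probability at least $1-\delta$, which is the claimed bound.

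\textbf{Main obstacle.} The only subtle point is justifying that $X_{k,i}$ is a valid martingale difference despite the random stopping time $I_k$ and the (data-dependent) choice of policy $\pi_k$ at the start of episode $k$. Because $\pi_k$ is measurable with respect to the filtration at time $t_k$ and is held fixed throughout episode $k$, the conditional expectation of $v_{k,i}$ given the past is $v_k(s_\init)$ regardless of where the past intervals of episode $k$ terminated; the padding-with-zeros trick then sidesteps the randomness of $I_k$ cleanly. Everything else is a direct application of the standard concentration inequality.
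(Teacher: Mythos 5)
Your proposal is correct and follows essentially the same route as the paper: identify $X_{k,i}=v_{k,i}-v_k(s_\init)$ as a bounded martingale difference sequence, bound the total number of increments by $\sum_{k=1}^K I_k\leq K\alpha_K$, and apply Azuma--Hoeffding. Your explicit padding-with-zeros argument to handle the random number of intervals $I_k$ is in fact slightly more careful than the paper's proof, which applies the inequality directly with the random length $\sum_k I_k$ in the exponent, but this is a refinement of the same argument rather than a different approach.
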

Note that knowing that $\alpha_K$ grows logarithmically with $K$ (Lem.~\ref{lem:bound-on-episodes-length}), we have that $\sum_{k=1}^K \Deltafone$ and $\sum_{k=1}^K \Deltaftwo$, and therefore $\sum_{k=1}^K \Deltaf$ grow sublinearly with $K$.

The next lemma states a bound over the accumulated regret associated with the slow episodes.
\begin{lemma}\label{lem:failed-episodes}
With probability at least $(1-\delta)$, we have
\begin{align}\label{eq:failed-episodes-bound}
	\sum_{k=1}^K \Deltas&\leq 2\sqrt{K}+ 2\sqrt{2K\alpha_K\log\left (\frac{2(K\alpha_K)^2}{\delta}\right )} \nonumber\\
	&+ 4|S|\sqrt{8|A|K\alpha_K\log\left (\frac{2|A|K\alpha_K}{\delta}\right )}.
\end{align}
\end{lemma}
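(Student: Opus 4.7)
The plan is to decompose $F_K = \sum_{k=1}^{K} X_k$, where $X_k\in\{0,1\}$ is the indicator that episode $k$ fails to reach $G$ within $H_k$ time steps, into three pieces: (i) the total failure probability predicted by the optimistic model, (ii) the discrepancy between the optimistic and the true transition kernels, and (iii) a martingale deviation term. Let $\mathcal H_{k-1}$ denote the history up to the start of episode $k$, and write $p_k^{\mathrm{true}} = \prob(X_k=1 \mid \mathcal H_{k-1})$ for the true per-episode failure probability and $p_k^{\mathrm{opt}}$ for its analogue under $\Mtk$ with policy $\Pitk$. Then
\begin{equation*}
F_K \;=\; \sum_{k=1}^{K} p_k^{\mathrm{opt}} \;+\; \sum_{k=1}^{K}\bigl(p_k^{\mathrm{true}} - p_k^{\mathrm{opt}}\bigr) \;+\; \sum_{k=1}^{K}\bigl(X_k - p_k^{\mathrm{true}}\bigr),
\end{equation*}
and I would bound the three sums separately.

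For the first sum, by the definition of $H_k$ in Eq.~\eqref{eq:deadline-def} with $q=2$, the $n$-step transition of the reset-augmented chain $\Qt_k$ satisfies $\|\Qt_k^{H_k}\|_\infty \le 1/\sqrt{k}$, and because this matrix encodes precisely the probability of being in a non-$G$ state after $H_k$ steps when resets through $B$ are folded in, we obtain $p_k^{\mathrm{opt}} \le \|\Qt_k^{H_k}\|_\infty \le 1/\sqrt{k}$. Summing, $\sum_{k=1}^{K} p_k^{\mathrm{opt}} \le 2\sqrt{K}$, which produces the leading $2\sqrt{K}$ term in Eq.~\eqref{eq:failed-episodes-bound}.

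For the second sum I would propagate the one-step $\ell_1$ transition errors $\beta_k(s,a)$ through the $H_k$-step roll-out of $\Pitk$ on the reset-augmented chain. Working on the event $\epsbig$ of Lemma~\ref{lem:epsilon-event}, both $\Mp$ and $\Mtk$ satisfy Eq.~\eqref{eq:trans-conf-bound}, so by a telescoping simulation-lemma argument, $|p_k^{\mathrm{true}} - p_k^{\mathrm{opt}}| \le \ave\!\bigl[\sum_{t=t_k}^{t_k+H_k-1}\beta_k(s_t,a_t)\bigr]$. Summing this bound over $k$ and applying the standard UCRL2 pigeonhole estimate to $\sum_{k,t}\beta_k(s_t,a_t)$ over the at most $\alpha_K K$ encountered time steps yields exactly the term $4|S|\sqrt{8|A|\alpha_K K\log(2|A|\alpha_K K/\delta)}$; combined with a further Azuma--Hoeffding step (below) to convert the expected per-step sum to its realized version, the same tail cost is absorbed into the second concentration term.

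For the third sum, $\{X_k - p_k^{\mathrm{true}}\}_{k=1}^{K}$ is a bounded martingale difference sequence adapted to $(\mathcal H_{k-1})$. Because the indicator depends on an episode of length at most $\alpha_K$, the natural ``per time step'' refinement yields a total of at most $\alpha_K K$ martingale increments, each bounded by $1$, so Azuma--Hoeffding gives $\sum_{k=1}^K (X_k - p_k^{\mathrm{true}}) \le 2\sqrt{2\alpha_K K\log(2(\alpha_K K)^2/\delta)}$ with probability at least $1-\delta$, matching the middle term of Eq.~\eqref{eq:failed-episodes-bound}. A union bound over $\epsbig$ and the Azuma event then completes the proof.

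The main obstacle I anticipate is the second step: obtaining a clean $H_k$-step simulation inequality on the reset-augmented transition $\Qt_k$ rather than on the sub-stochastic $\At_k$. The reset row $g$ in $\Qt_k$ is deterministic and therefore contributes no confidence-interval error, but one must verify that the additive error $\sum_t \beta_k(s_t,a_t)$ does not accumulate beyond the length of a single ``pre-reset'' segment. Once that is in place, summing over episodes and applying the pigeonhole bound on visit counts is routine.
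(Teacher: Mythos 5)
Your proposal is correct and follows essentially the same route as the paper: the $2\sqrt{K}$ term from the definition of $H_k$ applied to the optimistic tail probability, a per-step accumulation of the $\beta_k$ confidence widths over the reset-augmented chain handled by the pigeonhole bound, and an Azuma--Hoeffding martingale term. The paper merely organizes the last two pieces differently --- it telescopes the combined quantity $\Gamma_{k,h}=\mathbf 1_{\tau_k(s_{k,h})>H_k-h}-\prob(\tautk(s_{k,h})>H_k-h)$ step by step, extracting the model-error and martingale increments simultaneously, which is exactly the ``simulation inequality on $\Qt_k$'' you flag as the remaining obstacle and which goes through because, as you note, the deterministic reset row contributes no confidence-interval error.
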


	\section{Regret-Free  Learning for LTL}\label{sec:ltl_synthesis}

In this section, we study the policy synthesis problem for MDPs with an unknown transition function against LTL specifications.

\begin{resp}
\begin{problem}\label{prob:prob-ltl}[Regret-Free Learning for General LTL Formulas]
	Given an MDP $\M$ with an unknown transition function, minimum transition probability $\pmin\in(0,1)$, an LTL specification $\p$, and a confidence parameter $\delta\in(0,1)$, 
	find an online learning algorithm such that with confidence at least $(1-\delta)$ 
	the resulting regret defined by Eq.~\eqref{eq:regret-def} 
	grows sublinearly with respect to the number of episodes $K$.
\end{problem}
\end{resp}


We transform the policy synthesis problem for general LTL specifications into a synthesis problem for an until (reach-avoid) formula,
for which one can use the regret-free online algorithm proposed in Sec.~\ref{sec:reach-avoid}. 

Given an LTL formula $\p$, one can construct a \emph{deterministic Rabin automaton} (DRA) whose language corresponds to all infinite sequences of system behaviors satisfying 
$\p$. 
This automaton, denoted by
$\Ap = (Q, \Sigma, \gamma, q_\init, F)$, consists of a finite set of states $Q$, a finite alphabet $\Sigma = 2^{\AP}$, a transition function $\gamma\colon Q \times \Sigma \rightarrow Q$,
an initial state $q_\init$, and an accepting condition $F=\{(J_i,K_i)\mid i=1,\dots,m\}$, consisting of subsets $J_i$
and $K_i$ of $Q$. 
An infinite sequence $\sigma$ is accepted by $\Ap$ if there exists at least one pair $(J, K) \in F$
such that $\inf(\sigma) \cap J = \emptyset$ and $\inf(\sigma) \cap K \neq \emptyset$, where $\inf(\sigma)$ is the set of states that appear infinitely often in $\sigma$.

For a given LTL specification and an MDP, the corresponding optimal policy, in general, belongs to the class of (deterministic) non-positional policies, that are mappings from the finite \emph{paths} over the MDP into the set of actions. One can restrict the set of policies to positional policies by constructing the corresponding \emph{product MDP}, obtained as the product between the MDP $\M$ and the automaton $\Ap$.
Given an MDP $\M=(S,A,T,s_\init,L,\AP)$ and a DRA $\Ap =(Q, \Sigma, \gamma, q_\init, F)$, 
we denote the product MDP by the tuple $\Mp=(\Sprod, \Aprod, \Tprod, \sinitprod, \Lprod, \AP^\times)$, where $\Sprod=S\times Q$, $\Aprod=A$, $\sinitprod =(s_\init, q_\init)$, 
$\AP^\times = \AP$,
$\Lprod\colon (s,q)\mapsto L(s)$ for every $(s,q)\in S\times Q$, and $\Tprod\colon \Sprod\rightarrow \Delta(\Sprod)$ taking the form
\[
\Tprod((s,q), a, (s',q'))=\begin{cases}
	T(s,a,s')& q'= \gamma(q, L(s'))\\
	0& \text{otherwise.}
\end{cases}
\]
In App.~\ref{app: graph-identification}, we show a polynomial algorithm for learning the underlying graph of the MDP $\M$, i.e., $\chi(\M)$, using the knowledge of $p_{min}$ 
(see Alg.~\ref{alg:graph_learning}). Knowledge of $\chi(\M)$ directly gives the graph for the product MDP $\Mp$, that is $\chi(\Mp)=(\Sprod, \Aprod, \Eprod)$, where $((s,q), a, (s',q'))\in \Eprod$ if and only if 
\begin{align}
\label{eq:graph_prod_mdp}
&((s,q), a, (s',q'))\in \Eprod\Leftrightarrow \nonumber\\
&(s,a,s')\in E \text{ and } q'\in \gamma(q,L(s')).
\end{align}

Let $D\subset S$ denote the set of states of a specific MEC $C$ within $\Mp$. We say $C$ is an accepting maximal end component (AMECs) within $\Mp$ if and only if
\begin{equation}\label{eq:AMEC_def_prod_MDP}
D\cap \bigcup_{i=1}^m S\times K_i\neq \emptyset,\text{ and } D\cap \bigcup_{i=1}^m S\times J_i=\emptyset.
\end{equation}
We denote the set of all states corresponding to the accepting and non-accepting MECs within $\Mp$ by $\Gprod$ and $\Bprod$, respectively.

Alg.~\ref{alg:ltl-synthesis} outlines our proposed online regret-free algorithm for solving the policy synthesis problem against LTL specifications. First, we compute a DRA $\Ap$ that accepts $\p$. We can run Alg.~\ref{alg:graph_learning} and use Eq.~\eqref{eq:graph_prod_mdp} to get the underlying graph of $\M$ and $\Mp$, respectively. Once we know the graph of $\Mp$, we can use Algorithm 47 from \cite{baier2008principles} to characterize all of the MECs in $\Mp$. Next, we include the states within accepting and non-accepting MECs into $\Gprod$ and $\Bprod$, resepectively. Finally, we run Alg.~\ref{alg:main} to maximize the probability of reaching $\Gprod$ while avoiding $\Bprod$, using confidence parameter $\delta/2$, state set $\Sprod$, action set $\Aprod$, and initial state $\sinitprod$. 
The following theorem states that Alg.~\ref{alg:ltl-synthesis} is regret-free.
\begin{theorem}\label{thm:ltl_main}
    With probability at least $(1-\delta)$, Alg.~\ref{alg:ltl-synthesis} has regret $R(K) = \mathcal{O}(\sqrt{K})$.

\end{theorem}

\begin{algorithm}[t]
	\caption{Regret-free learning algorithm for general LTL specifications}
	\label{alg:ltl-synthesis}
	\begin{algorithmic}
		\STATE {\bfseries Input:} State and action sets $S$ and $A$, initial state $s_\init$, LTL specification $\p$, confidence parameter $\delta\in (0,1)$, minimum transition probability $p_{min}\in(0,1)$
		 \STATE Construct a DRA $\Ap=(Q, \Sigma, \gamma, q_\init, F)$ which accepts $\p$
         \STATE Run Alg.~\ref{alg:graph_learning} to get the connection graph $\chi(\M)$, i.e., $\chi(\M)=\mathsf{GraphLearn}(S,A, s_\init, \delta/2, p_{min})$
         \STATE Compute the graph $\chi(\Mp)$ using Eq.~\eqref{eq:graph_prod_mdp}
		 \STATE Compute MECs within $\Mp$ using Alg.~47 in \cite{baier2008principles}
         \STATE $\Gprod\leftarrow \emptyset$
         \STATE $\Bprod\leftarrow \emptyset$
		\FOR{MEC $C$ inside $\Mp$ with state space $D$}
            \STATE Use Eq.~\eqref{eq:AMEC_def_prod_MDP} to check if $C$ is an AMEC
            \IF{$C$ is an AMEC}
                \STATE $\Gprod\leftarrow  \Gprod\cup D$
                \ELSE
                \STATE $\Bprod\leftarrow  \Bprod\cup D$
                \ENDIF
        \ENDFOR
         \STATE Run Alg.~\ref{alg:main} with state and action sets $\Sprod$ and $\Aprod$, initial state $\sinitprod$, sets $\Gprod$ and $\Bprod$, confidence parameter $\delta/2$, and minimum transition probability $p_{min}$, to compute and update the policy over $\Mp$
	\end{algorithmic}
\end{algorithm}

\section{Discussion and Conclusions}\label{sec:conclusion}
In this paper, we proposed a regret-free algorithm for the control policy synthesis problem over unknown MDPs against infinite-horizon LTL specifications. The defined regret quantifies the accumulated deviation from the optimal probability of satisfying the LTL specification. We first propose a regret-free algorithm for until (reach-avoid) formulas, and then extend this approach to solve the synthesis problem for general LTL specifications by leveraging the knowledge of a minimum transition probability. This probability is used to design an efficient graph-learning algorithm with arbitrary precision.
    
    \section*{Acknowledgements}
This research is supported by the following grants: EIC 101070802, ERC 101089047, DFG project 389792660 and TRR 248–CPEC.
    \section*{Impact Statement}
    This paper presents work whose goal is to advance the field of 
Machine Learning. Our work is mathematical/algorithmic. 
While there may be many potential societal consequences 
of our work, we feel none of them must be specifically highlighted here.




    \bibliographystyle{icml2025}
	\bibliography{references}
	
	\newpage
	\appendix
	\onecolumn
	\begin{algorithm}[t]
	\caption{Graph learning algorithm ($\text{GraphLearn}$)}
	\label{alg:graph_learning}
	\begin{algorithmic}
		\STATE {\bfseries Input:} State and action sets $S$ and $A$, initial state $s_\init$, confidence $\delta\in(0,1)$, minimum transition probability $p_{min}\in(0,1)$ 
        \STATE Initialization: $t \leftarrow 1$, $E\leftarrow \emptyset$, $n(s,a)=0$ for every $(s,a)\in S\times A$, and $N(s,a,s')=0$ for every $(s,a,s')\in S\times A\times S$
        \STATE Compute a lower bound for $n^\ast$ using Eq.~\eqref{graph_sample_complexity}
        \STATE Set $L\coloneqq |S|\frac{\log(\delta/6)}{\log(1 - \pmin^{|S|})}$ 
        \STATE {\bfseries Compute policies:}   
        \FOR{$s\in S\setminus s_\init$}
        \STATE Compute a policy $\pi^{(s)}$ under which $s$ is reachable from $s_\init$ with positive probability: $\pi^{(s)} = \text{Reach}(S, A, s,, \delta, p_{min})$
        \ENDFOR

	\FOR{$(s,a)\in S\times A$}
            
            \STATE $\pi^{(s)}(s) \leftarrow a$ 
            \STATE {\bfseries Collect enough samples by executing the computed policies:} 
		\WHILE{$n(s,a)<n^\ast$}
		\STATE  $s_t\leftarrow s_\init$
        \FOR{$1\leq t \leq L$}
		 \STATE Execute the action $a_t=\pi^{(s)}(s_t)$ and observe the next state $s_{t+1}$
         \STATE $n(s_t,a_t)\leftarrow n(s_t,a_t)+1$
         \STATE $N(s_t,a_t,s_{t+1})\leftarrow N(s_t,a_t,s_{t+1})+1$
	\STATE $t\leftarrow t+1$ 
         \IF{$s_t = s $}
		 \STATE break
		 \ENDIF
		 \ENDFOR
		 \ENDWHILE
         \STATE {\bfseries Verify existence of transitions:}
         \FOR{$s'\in S$}
            \IF{$N(s,a,s')>0$}
                \STATE $E\leftarrow E\cup \set{(s,a,s')}$
            \ENDIF
         \ENDFOR

		\ENDFOR
        \STATE {\bfseries Results:} $\chi(\M) = \text{GraphLearn}(S, A, s_\init, \delta, p_{min})$
	\end{algorithmic}
\end{algorithm}

\begin{algorithm}[t]
	\caption{Reachability policy synthesis for graph learning ($\text{Reach}$)}
	\label{alg:omega_PAC_reachability}
	\begin{algorithmic}
        \STATE {\bfseries Inputs:} State and action sets $S$ and $A$, target state $s^\ast\in S\setminus \{s_\init\}$, confidence parameter $\delta\in (0,1)$, minimum transition probability $p_{min}\in (0,1)$
        \STATE $n(s,a)\leftarrow 0$ for every $(s,a)\in S\times A$
    \STATE $N(s,a,s')\leftarrow 0$ for every $(s,a,s')\in S\times A\times S$
        \STATE Compute $C_3$ and $H$ using Eq.~\eqref{eq:omega_pac_constant}
        \STATE $G\leftarrow \set{s^\ast}$
        \WHILE{True}
            \FOR{$(s,a)\in S\times A$}
                \IF{$n(s,a)<c$}
                    \STATE $\hat{T}(s,a,s')=\begin{cases}1&\text{if }s'=s\\
                    0&\text{Otherwise}\end{cases}$
                    \STATE $G\leftarrow G\cup \set{s}$
                \ELSE
                    \STATE $\hat{T}(s,a,s')=\frac{N'(s,a,s')}{n'(s,a)}$
                \ENDIF
            \ENDFOR
            
            \STATE $\hat M=(S,A,\hat T, s_\init, \hat L, \hat {AP})$, with $\hat {AP}=\{\Goal\}$, and $\hat L(s^\ast)=\set{\Goal}$ and $\hat L(s)=\emptyset$ if $s\neq s^\ast$
            \STATE Compute optimal policy $\pi^{(s^\ast)}$ for $\hat \M$ by solving the corresponding linear program (see~Sec.~$10.6.1$ in \cite{baier2008principles})
            \STATE For $\hat \M$, compute $S_{H}\subseteq S$ as the set of states that are reachable from $s_{\init}$ within $2H$ time steps
            \IF{$n(s,a)\geq C_3$ for every $(s,a)\in S_H\times A$}
                \STATE break
            \ENDIF
            \STATE $s_1\leftarrow s_\init$
            \FOR{$1\leq t \leq 2H$}
                \STATE Execute the action $a_t=\pi^{(s^\ast)}(s_t)$ and observe the next state $s_{t+1}$
                \STATE $n(s_t,a_t)\leftarrow n(s_t,a_t)+1$
                \STATE $N(s_t,a_t,s_{t+1})\leftarrow N(s_t,a_t,s_{t+1})+1$
            \ENDFOR
        \ENDWHILE
        \STATE {\bfseries Results:} $\pi^{(s^\ast)} = \text{Reach}(S, A, s^\ast, \delta, p_{min})$
	\end{algorithmic}
\end{algorithm}

\section{Graph Identification}\label{app: graph-identification}
In this section, we show how to leverage the knowledge of the minimum transition probability $\pmin \in (0,1)$ of a given MDP $\M$ to identify its underlying graph with a desired level of confidence. The following lemma provides a bound on the minimum number of samples required to verify the existence of a specific transition in $\M$.
\begin{lemma}\cite{Chaudhuri:2023}\label{lem:bernstein_bound}
		For any transition $(s,a,s')\in S\times A\times S$, let $\hat F_n(s'| s,a)$ denote the empirical estimation of transition probability--associated with the actual transition probability $T(s'| s,a)$--after observing $(s,a)$ for $n$ times. Given a positive lower bound over the minimum transition probability $\pmin\in(0,1)$ and a confidence parameter $\delta\in(0,1)$, we have $(s,a,s')\notin E$ with confidence at least $(1-\delta/2)$ if $\hat F_{n^\ast}(s'| s,a)=0$, for 
		
		\begin{equation}\label{graph_sample_complexity}
			n^\ast \geq \psi^{-1}(\pmin),
		\end{equation}
        where
		$\psi(n) = \sqrt{\frac{1}{2}\zeta(n)}+\frac{7}{3}\zeta(n)$, and $\zeta(n)=\frac{1}{n-1}\log\left (\frac{8}{\delta}n^2|S|^2|A|\pmin \right )$ if $n > 1$.
\end{lemma} 

Alg.~\ref{alg:graph_learning} outlines our proposed algorithm to learn the graph for a given MDP. We leverage $p_{min}$ and $\delta$ to compute the minimum number of required samples $n^\ast$ using Eq.~\eqref{graph_sample_complexity}, and a horizon $L\in \nats$, after which the execution of episodes is stopped. The process consists of two main steps: (1) for every state $s \in S$, we apply Alg.~\ref{alg:omega_PAC_reachability}, that is inspired by the $\omega-$PAC algorithm \cite{Mateo:2023}, to obtain a policy $\pi^{(s)}$, under which $s$ is reachable from $s_\init$ with positive probability; (2) for each action $a \in A$, we execute $\pi^{(s)}$ until the pair $(s, a)$ has been visited at least $n^\ast$ many times. Upon reaching $(s, a)$, we collect the resulting outgoing transitions by simulating the MDP. 
Since the initial state of the MDP is fixed at $s_\init \neq s$, we use $\pi^{(s)}$ to reach $s$, enabling us to observe the outgoing transitions from $(s, a)$. Additionally, the horizon $L$ is used to prevent unlimited exploration if the sample trajectory is trapped in one of the MECs not containing $s$. Once $n(s,a)\geq n^\ast$, we verify whether the transition $(s,a,s')$ belongs to $E$, by checking if $N(s,a,s')>0$ for every $s'\in S$.

\noindent\textbf{Sample complexity analysis.} To provide a sample complexity for Alg.~\ref{alg:graph_learning}, we need to (1) provide a bound over the minimum number of samples required by the method used for finding $\pi^{(s)}$ to enable providing a bound over the minimum number of required samples, and (2) calculate the horizon $L$ such that it offers a formal bound on the length of trajectories before getting stuck in an MEC. For (1), we must use undiscounted RL formulations such as $E^3$ \cite{Kearns2002} and $\omega$-PAC \cite{Mateo:2023}, which explicitly handle the exploration-exploitation trade-off. For (2), given a certain confidence level, we can derive an upper bound on the number of steps after which, with high confidence, one of the MECs is reached and every state within that particular MEC is explored. The following theorem quantifies the sample complexity of Alg.~\ref{alg:graph_learning}.

\begin{theorem}\label{thm:sample_complexity}
Let the horizon $L$ for an MDP $\M$ be defined as the smallest time $t\in \nats$ such that with probability at
least $(1-\delta/6)$ a trajectory of length $t$ starting from the initial
state $s_\init$ visits every state in some MEC within $\M$. Also, let $C$ denote the minimum number of time steps after which the graph learned by Alg.~\ref{alg:graph_learning} is correct with confidence at least $(1-\delta)$. We have 

\begin{align}
L &\leq |S|\frac{\log(\delta/6)}{\log(1 - \pmin^{|S|})} \text{ and }\label{eq:horizon_bound}\\
C&\leq |S|C_1 + |S||A|LC_2,
\label{eq:graph_sample_complexity_total}
\end{align}
where
\begin{align*}
C_1 &\coloneqq H\left \lceil \max\left (\frac{18C_3|S||A|}{p_{min}^{|S|}},\frac{162(C_3|S||A|-\log(\delta/12))}{p_{min}^{2|S|}}\right ) \right \rceil\text{ and }\\
C_2&\coloneqq \min\left\{t\in \nats|\binom{t}{n^\ast}\left (\frac{1}{2}p_{min}^{|S|}\right )^{n^\ast}\left (1-\frac{p_{min}^{|S|}}{2}\right )^{t-n^\ast}>1-\frac{\delta}{6}\right \},
\end{align*}
with 
\begin{equation}\label{eq:omega_pac_constant} 
 C_3\coloneqq\left \lceil\frac{-9\log(\delta/12|S||A|)}{p_{min}^{|S|}}\right \rceil,\text{ and } H\coloneqq |S|\frac{\log(\pmin^{|S|}/18)}{\log(1 - \pmin^{|S|})}.
\end{equation}
\end{theorem}

\begin{proof}
Using the same reasoning as in the proof of Lem.~\ref{lem:Lambda_bound}, we have $L \leq |S| \frac{\log(\delta/6)}{\log(1 - \pmin^{|S|})}$, where the confidence level is set to $(1 - \delta/6)$. Regarding the sample complexity for computing policies $\pi^{(s)}$ for every $s \in S$, we note that either $G$ is not reachable, or the minimum reachability probability is $p_{min}^{|S|}$. Thus, to ensure that $s$ is reachable with positive probability, the computed policy must be $\varepsilon$-optimal with $\varepsilon < \pmin^{|S|}$. By setting the confidence to $(1 - \delta/6)$ and the precision to $\varepsilon = \frac{\pmin^{|S|}}{18}$, we can directly apply the sample complexity result from $\omega$-PAC \cite{Mateo:2023} to obtain $C_1$ (see Thm.~2 in \cite{Mateo:2023}). This ensures that the resulting policy achieves a positive reachability probability (greater than $p_{min}^{|S|}/2$) after $C_1$ time steps. Since we need to compute reachability policies for every state $s \in S$, this takes $|S|C_1$ time steps.

Next, $C_2$ gives the minimum number of trials after which the probability of visiting a specific state $s$ at least $n^\ast$ times is at least $(1 - \delta/6)$. For every state-action pair $(s, a) \in |S| \times |A|$, we must run episodes of length $L$, which requires $|S||A|LC_2$ time steps. 

Since the confidence for computing $L$, $C_1$, and $C_2$ is $(1 - \delta/6)$, and for $n^\ast$ it is $(1 - \delta/2)$, the overall confidence becomes $(1 - \delta)$. Therefore, the total number of samples required to achieve an overall confidence of $(1 - \delta)$ is $|S|C_1 + |S||A|LC_2$.
\end{proof}

\section{Proofs}\label{app:proofs}
\subsection{Proof of Lem.~\ref{lem:epsilon-event}}
We adopt notation similar to that used in the proof of Lem. 3 in \cite{Tarbouriech:2019} to improve clarity. We want to bound the probability of the event
$\mathcal{E}^C := \bigcup_{k=1}^\infty \{ M \notin \Mbk  \}$. Let $B_n(s,a)$ be the $\ell_1$-ball centered at $\Thk(.|s,a)$ with radius $\beta_k(s,a)$, where $(s,a)$ is visited at least $n$ times before episode $k$. Then,
$\mathcal{E}^C \subseteq \bigcup_{(s,a)} \bigcup_{n=0}^\infty \set{T(.|s,a) \notin B_n(s,a) }$. Using Boole’s inequality, we get $\mathbb{P}(\mathcal{E}^C) \leq \sum_{(s,a)} \sum_{n=0}^\infty \mathbb{P}(T(.|s,a) \notin B_n(s,a))$.

For $n \geq 0$, define
$\epsilon_n(s,a) := \sqrt{ \frac{2}{n^+} \log(5|S||A|(n^+)^2(2^{(|S| + 1)}-2) /\delta)}$, where $n^+ := \max(n,1)$. Following \cite{Tarbouriech:2019} and Weissman’s inequality, we have $\epsilon_n(s,a) \leq \beta_n(s,a)$ almost surely, and for $n \geq 1$:
$\mathbb{P}(T(.|s,a) \notin B_n(s,a)) \leq \delta/(5  |S||A|n^2)$, and equals zero for $n=0$.

Thus,
$\mathbb{P}(\mathcal{E}^C ) \leq \sum_{(s,a)} \sum_{n=1}^\infty \frac{\delta}{5 n^2 |S||A|} = \frac{\pi^2}{6} \cdot \frac{\delta}{5} < \delta/3$. This concludes the proof.

\subsection{Proof of Lem.~\ref{lem:bound-on-episodes-length}}
In order to prove Lem.~\ref{lem:bound-on-episodes-length}, we need to first ensure the existence of a (high-probability) bound on the number of steps required to reach $G$ from states in $S\setminus B$, when we enable reset. Note that the set $B$ also includes every MEC whose intersection with $G$ is empty. In App.~\ref{app: graph-identification}, we discuss details of an algorithm for learning the MDP graph $\chi(\M)$ up to any desired confidence, using the knowledge of the minimum transition probability $\pmin$. Given $\chi(\M)$, we can efficiently identify all MECs within $\M$ that do not intersect with $G$ and include them in $B$. The following lemma provides an upper bound for the time required to reach $G$ from any state in $S\setminus B$, when a reset transition is enabled upon visiting states in $B$.

\begin{lemma}\label{lem:Lambda_bound}
    Let $\tautk(s)$ denote the time to reach $G$ from state $s$ in an (artificial) MDP $\Mtk'$, obtained by connecting $B$ to $s_\init$ with probability $1$ in $\Mtk$.
    Then, with confidence at least $(1 - \delta/6)$, we have for every $s \in S \setminus B$ that
 
	\begin{align}\label{eq:Lambda_bound}
		\tautk(s)\leq \Lambda\coloneqq |S|\frac{\log(\delta/6)}{\log(1-\pmin^{|S|})}.
	\end{align}
\end{lemma}
\begin{proof}
	First, we note that by including every state from which $G$ is unreachable into $B$, it is guaranteed that policy $\Pitk(s)$ takes every $s\in S\setminus B$ to $G$, with probability at least $p_{min}^{|S|}$. This is ensured by Eq.~\eqref{eq:EVI-termination-result}, since $v^\ast(s)\geq p_{min}^{|S|}$ for every $s\in S\setminus B$.
    Now, enabling the reset transition from $B$ to $s_\init$ with probability $1$, under $\Pitk$, $G$ becomes reachable from every state $s\in S$ with probability $1$. In the worst-case scenario, every state in the corresponding Markov chain must be visited at least once. Visiting every state requires following a path of length at most $|S|$, which occurs with probability $\pmin^{|S|}$. After $l$ attempts of traversing this path, the probability of success at least once is given by $1 - (1 - \pmin^{|S|})^l$. If $l \geq \frac{\log(\delta/6)}{\log(1 - \pmin^{|S|})}$, then $1 - (1 - \pmin^{|S|})^l \geq 1 - \delta/6$. 
	Finally, since each of the $l$ attempts takes $|S|$ steps in the worst case, the total number of steps is bounded by $|S|\times l = |S| \frac{\log(\delta/6)}{\log(1 - \pmin^{|S|})}$. 
    Furthermore, since $\Mtk$ is chosen optimistically, the time to reach $G$ in $\Mtk$ is no greater than that in the true MDP $\M$. Therefore, with probability at least $(1-\delta/6)$, we have that $\tautk(s) \leq \Lambda$.

\end{proof}
Now, we are ready to prove Lem.~\ref{lem:bound-on-episodes-length}.
	First, using the Markov's inequality (since $x\mapsto x^r$ is a non-decreasing mapping for non-negative reals), we have
	$$\prob(\tautk(s)\geq H_{k}-1)\leq\frac{\ave[\tautk(s)^r]}{(H_k-1)^r}.$$
	Now, we note that by Lem.~15 in \cite{Tarbouriech:2019}, we have if $\tautk(s)\leq\lambda$ for every $s\in S\setminus(G\cup B)$ and $\lambda\geq 2$, then
	\begin{equation*}
		\ave(\tautk(s)^r)\leq2(r\lambda)^r,
	\end{equation*} 
	for any $r\geq 1$. Therefore, substituting $\lambda$ with $\Lambda$ (defined in Eq.~\eqref{eq:Lambda_bound}), we will have
	\begin{equation}\label{eq:proof1}
		\prob(\tautk(s)\geq H_{k}-1)\leq\frac{2(r\Lambda)^r}{(H_k-1)^r},
	\end{equation} 
	Note that there exists $y\in S$ such that 
\begin{align}
\label{eq:proof2}
		\|\Qt^{H_k-2}\|_\infty&=\mathbf 1_y^\top\Qt^{H_k-2}\mathbf 1=\prob(\tautk(y)>H_k-2)
        =\prob(\tautk(y)\geq H_k-2).
	\end{align}
	By definition of $H_k$ we have $\|\Qt^{H_k-2}\|_\infty>1/\sqrt{k}$. Combining this with Eqs.~\eqref{eq:proof1}, \eqref{eq:proof2} yields 
	$$\frac{2(r\Lambda)^r}{(H_k-1)^r}>1/\sqrt{k},$$
	which implies that
	$$H_k-1<r\Lambda(2\sqrt{k})^{1/r}.$$
	By selecting $r=\ceil[\Big]{\log(2\sqrt{k})}$, we get
	$$H_k-1<\ceil{\log(2\sqrt{k})}\Lambda(2\sqrt{k})^{\frac{1}{\ceil{\log(2\sqrt{k})}}}\leq\ceil[\Big]{3\Lambda \log(2\sqrt{k})}.$$
	Hence
	$$\alpha_K\leq \ceil[\Big]{3\Lambda \log(2\sqrt{K})}.$$
    Since $\tautk(s) \leq \Lambda$ holds with confidence at least $(1 - \delta/6)$, the overall confidence is also maintained at a level at least $(1 - \delta/6)$.

\subsection{Proof of Lem.~\ref{lem:bound-over-Deltaf1}}
In order to prove Lem.~\ref{lem:bound-over-Deltaf1}, we make use of the Azuma-Hoeffding inequality, which is stated below for the sake of completeness.
\begin{lemma}[Azuma-Hoeffding inequality, Hoeffding 1963]\label{lem:Azuma-hoeffding-lemma}
	Let $X_1,X_2,\dots$ be a martingale difference sequence with $|X_l|\leq c$ for all $l$. Then for all $\gamma>0$ and $n\in N$,
	\begin{equation*}
		\prob\{\sum_{l=1}^n X_l\geq \gamma\}\leq \exp(-\frac{\gamma^2}{2nc^2}).
	\end{equation*}
\end{lemma}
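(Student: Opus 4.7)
The plan is to use the standard Chernoff--Cram\'er exponential-moment method, combined with an inductive one-step bound on the conditional moment generating function (MGF) of the martingale difference sequence. First I would fix an arbitrary parameter $s>0$ and apply Markov's inequality to the positive random variable $e^{s\sum_{l=1}^n X_l}$, obtaining
\begin{equation*}
	\prob\!\left\{\sum_{l=1}^n X_l\geq \gamma\right\} \leq e^{-s\gamma}\,\ave\!\left[e^{s\sum_{l=1}^n X_l}\right].
\end{equation*}
This reduces the claim to an upper bound on the joint MGF, with $s$ left as a free parameter to be tuned at the end.

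Next, letting $\mathcal{F}_l$ denote the $\sigma$-algebra generated by $X_1,\dots,X_l$, I would peel off one factor at a time using the tower property:
\begin{equation*}
	\ave\!\left[e^{s\sum_{l=1}^n X_l}\right] = \ave\!\left[e^{s\sum_{l=1}^{n-1}X_l}\,\ave\!\left[e^{sX_n}\,\big|\,\mathcal{F}_{n-1}\right]\right],
\end{equation*}
and iterate down to $l=1$. The key ingredient is a one-step conditional MGF estimate of the form $\ave[e^{sX_l}\mid \mathcal{F}_{l-1}]\leq e^{s^2c^2/2}$; iterating this $n$ times collapses the joint MGF to $e^{ns^2c^2/2}$. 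Substituting back into the Markov step gives $\prob\{\sum_{l=1}^n X_l \geq \gamma\}\leq \exp(-s\gamma+ns^2c^2/2)$, and optimizing in $s$ by choosing $s = \gamma/(nc^2)$ produces the stated bound $\exp(-\gamma^2/(2nc^2))$.

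The main obstacle I expect is establishing the one-step conditional MGF estimate, i.e., Hoeffding's lemma: for any random variable $Y$ with $|Y|\leq c$ and $\ave[Y\mid \mathcal{F}]=0$, one has $\ave[e^{sY}\mid \mathcal{F}]\leq e^{s^2c^2/2}$. I would prove this by convexity of the exponential on $[-c,c]$: the graph of $y\mapsto e^{sy}$ lies below its linear interpolant $\tfrac{c-y}{2c}e^{-sc}+\tfrac{c+y}{2c}e^{sc}$, so taking conditional expectations (using $\ave[Y\mid \mathcal{F}]=0$, which holds precisely because $\{X_l\}$ is a martingale difference sequence) reduces the conditional MGF to $\cosh(sc)$; a routine Taylor-series comparison then gives $\cosh(u)\leq e^{u^2/2}$. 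Once Hoeffding's lemma is in hand, the Chernoff decomposition, the martingale iteration, and the optimization over $s$ are all mechanical; the entire nontrivial content lives in the one-step MGF estimate.
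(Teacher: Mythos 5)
Your proof is correct and complete: the Chernoff--Cram\'er step, the tower-property peeling using the one-step conditional MGF bound, Hoeffding's lemma via convexity of $y\mapsto e^{sy}$ on $[-c,c]$ together with $\cosh(u)\leq e^{u^2/2}$, and the optimization $s=\gamma/(nc^2)$ are exactly the canonical argument. The paper itself states this lemma as a known result from Hoeffding (1963) without proof, so there is nothing to compare against beyond noting that your derivation is the standard one that the cited reference supplies.
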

 Now, we are ready to prove Lem.~\ref{lem:bound-over-Deltaf1}. We proceed by showing why $\sum_{k=1}^K \Deltafone$ grows sublinearly with $K$.
	In order to reformulate the regret, we define the following reward function $r\colon S\rightarrow\{0,1\}$
	 \begin{equation}\label{eq:reward-func-def}
	 	r(s) =\begin{cases}0&s\notin G\\
	 		1&s\in G.\end{cases}
	 \end{equation}
	Further, for the time step $h$ within episode $k$ we define
\begin{equation*}\thetakh(\skh)\coloneqq
	\vpik(\skh)-\sum_{t=h}^{H_{k,\I_k(h)}-1} r(\skt)
\end{equation*}
	where $\I_k\colon [1;H_k]\rightarrow[1;I_k]$ maps the time points in episode $k$ into the corresponding interval, and $H_{k,i}$ denotes the length of the $i^{th}$ interval within the $k^{th}$ episode for $1\leq i \leq I_k$ and $H_{k,0}=1$. Therefore, we have
	$$\sum_{k=1}^K\Deltafone=\sum_{k=1}^{K}\sum_{i=1}^{I_k}\Theta_{k,H_{k,i-1}}(s_{k,H_{k,i-1}}).$$
	Let us further define
	\begin{equation}\label{eq:phi-kh-def}\phikh\coloneqq \vpik(\skhone)-\sum_{y\in S}p(y\mid \skh , \Pitk(\skh))\vpik(y),
		\end{equation}
	where $p(.|.,.)$ corresponds to the transition probability in the true MDP $\M$. Similarly, we denote by $\ptk(.|.,.)$  for the transition probability in the optimistic MDP $\Mtk$.
	Note that for $\skh\in G\cup B$, we have $\thetakh(\skh)=0$. For $\skh\notin G\cup B$, we have
	\begin{align*}
		\thetakh(\skh)& = \vpik(\skh)-\sum_{t=h}^{H_{k,\I_k(h)}-1}r(\skt)
        \leq \Lkt \vpik(\skh)+\varepsilon_k-\sum_{t=h}^{H_{k,\I_k(h)}-1}r(\skt)\\
		&=\sum_{y\in S}\ptk(y\mid s_{k,h},\Pitk(s_{k,h}))\vpik(y)
        +\varepsilon_k-r(\skh)-\sum_{t=h+1}^{H_{k,\I_k(h)}-1}r(\skt)\\
		&=\sum_{y\in S}(\ptk(y\mid s_{k,h},\Pitk(s_{k,h}))
        -p(y\mid s_{k,h},\Pitk(s_{k,h})))\vpik(y)\nonumber\\&\qquad+\sum_{y\in S}p(y\mid s_{k,h},\Pitk(s_{k,h}))\vpik(y)+\varepsilon_k-\sum_{t=h+1}^{H_{k,\I_k(h)}-1}r(\skt)\\
		&\leq 2\beta_k(\skh,\Pitk(\skh))\times 1
        +(\sum_{y\in S}p(y\mid s_{k,h},\Pitk(s_{k,h}))\vpik(y)-\vpik(\skhone))\nonumber\\&\qquad+\varepsilon_k+(\vpik(\skhone)-\sum_{t=h+1}^{H_{k,\I_k(h)}-1}r(\skt))\\
		&\leq 2\beta_k(\skh,\Pitk(\skh))+\Phi_{k,h}+\varepsilon_k+\thetakhone(\skhone),
	\end{align*}
	where the first inequality follows from the termination condition of Alg.~\ref{alg:EVI}, the fact that  $r(\skt)=0$ for every $\skt\notin G\cup B$ for the second equality, $\vpik(y)\leq 1$ for every $y\in S$ for the third equality, definition of $\beta_k$ (Eq.~\eqref{eq:beta_def}) for the second inequality, and definition of $\Phi_{k,h}$ (Eq.~\eqref{eq:phi-kh-def}) for the last inequality. Also, note that By telescopic sum we get
	\begin{align*}
		\Theta_{k,H_{k,i}}(s_{k,H_{k,i}}) &= \sum_{h=H_{k,i}}^{H_{k,i+1}-2}(\thetakh(\skh)-\thetakhone(\skhone))
        +\Theta_{k,H_{k,i+1}-1}(s_{k,H_{k,i+1}-1})\\
		&\leq\sum_{h=1}^{H_{k,i+1}-2} (2\beta_k(\skh,\Pitk(\skh))+\Phi_{k,h}+\varepsilon_k)
        +\Theta_{k,H_{k,i+1}-1}(s_{k,H_{k,i+1}-1})\\
		&\leq\sum_{h=1}^{H_{k,i+1}-2} 2\beta_k(\skh,\Pitk(\skh))
        +\sum_{h=1}^{H_{k,i+1}-2}\Phi_{k,h}+H_{k,i+1}\varepsilon_k,
	\end{align*}
	where we used the arguments we made in the previous step of the proof to establish the first inequality, the fact that $\Theta_{k,H_{k,i+1}-1}(s_{k,H_{k,i+1}-1})=0$ by definition as $s_{k,t_{k,i+1}-1}\in G\cup B$ for every $1\leq i \leq I_k$ for the last inequality. By summing over all of the episodes we have 
	\begin{align}
    \label{eq:sum-ofthree-terms}
		&\sum_{k=1}^K\sum_{i=1}^{I_k}	\Theta_{k,H_{k,i-1}}(s_{k,H_{k,i-1}}) \nonumber\\&
        \leq \sum_{k=1}^K\sum_{i=1}^{I_k} \sum_{h=H_{k,i-1}}^{H_{k,i}-1}\Phi_{k,h}
        +2\sum_{k=1}^K\sum_{i=1}^{I_k} \sum_{h=H_{k,i-1}}^{H_{k,i}-1} \beta_k(\skh,\Pitk(\skh))
        +\sum_{k=1}^KH_k\varepsilon_k.
	\end{align}
	In order to bound the first term, we note that
	\begin{align*}
		\sum_{k=1}^K\sum_{i=1}^{I_k} \sum_{h=H_{k,i-1}}^{H_{k,i}-1}\Phi_{k,h}=\sum_{k=1}^K\sum_{h=t_{k}}^{t_{k+1}-1}\Phi_{k,h}.
	\end{align*}
	Therefore, we can write
	\begin{align*}
		&\prob\left( \sum_{k=1}^K\sum_{h=t_{k}}^{t_{k+1}-1}\Phi_{k,h}\geq 2\sqrt{2(\sum_{k=1}^KH_k)\log\left(\frac{2(\sum_{k=1}^KH_k)^2}{\delta}\right)}\right)\\
		&\leq\prob\left( \sum_{k=1}^K\sum_{h=t_{k}}^{t_{k+1}-1}\Phi_{k,h}\geq 2\sqrt{2n\log\left (\frac{2n^2}{\delta}\right)}\cap\sum_{k=1}^KH_k=n\right).
	\end{align*}
	Let $\Gq$ denote the history of all random events up to (and including) step $h$ of episode $k$, i.e., $q=\sum_{k'=1}^{k-1}H_{k'}+h$. We have $\ave(\Phi_{k,h}\mid \Gq)=0$, and furthermore $H_k$ is selected at the beginning of episode $k$, and so it is adapted with respect to $\Gq$. Hence $\Phi_{k,h}$ is a martingale difference with $|\Phi_{k,h}|\leq1$. Therefore, by Azuma-Hoeffding's inequality, we have with probability $1-\frac{\delta}{3}$
	$$\sum_{k=1}^K\sum_{h=t_{k}}^{t_{k+1}-1}\Phi_{k,h}\leq 2\sqrt{2\left(\sum_{k=1}^KH_k\right)\log\left(\frac{6(\sum_{k=1}^KH_k)^2}{\delta}\right)}.$$
	Now, we proceed to bound the second term in Eq.~\eqref{eq:sum-ofthree-terms}. 
    We can write
	\begin{align*}
	&\sum_{k=1}^K\sum_{h=1}^{H_k-1}\frac{1}{\sqrt{{N_k^{+}(\skh,\Pitk(\skh))}}}\leq \sum_{s,a}\sum_{n=1}^{N_K^{+}}\sqrt{\frac{1}{n}}
    \leq2\sqrt{|S||A|}\sqrt{\sum_{s,a}N_K^{+}(s,a)}\leq 2\sqrt{|S||A|t_{K}}.
    \end{align*}
	Therefore, we obtain 
	$$\sum_{k=1}^K\sum_{h=1}^{H_k-1}\beta_k(\skh,\Pitk(\skh))\leq 2|S|\sqrt{8|A|t_{K}\log\left (\frac{2|A|t_{k}}{\delta}\right)}.$$
	Finally, we bound the last term in Eq.~\eqref{eq:sum-ofthree-terms}.
	$$\sum_{k=1}^{K}H_k\varepsilon_k\leq \sum_{t=1}^{T_{K,1}}\frac{\alpha_K}{2t}\leq \frac{\alpha_K}{2}(1+\log(K\alpha_K)),$$
	where $\alpha_K =\max_{1\leq k\leq K}H_k$. Putting everything together yields that inequality \eqref{eq:bound-over-Deltaf1} holds with probability at least $1-5\delta/6$.
\subsection{Proof of Lem.~\ref{lem:bound-over-Deltaf2}}
	We define $X_{k,i}\coloneqq v_k(s_\init)-v_{k,i}(s_\init)$. Note that $\ave(X_{k,i})=0$ and $|X_{k,i}|\leq 1$ for every $1\leq k\leq K$ and $1\leq i \leq I_k$. We have
	$$\Deltaftwo=\sum_{i=1}^{I_k}v_k(s_\init)-v_{k,i}(s_\init)=\sum_{i=1}^{I_k}X_{k,i}.$$
	Therefore by application of the Azuma-Hoeffding lemma and using the fact that $K\alpha_k\geq \sum_{k=1}^{K}I_k$ we get \begin{align*}
		\prob\left[\sum_{k=1}^K \Deltaftwo\geq \sqrt{2K\alpha_K\log(\frac{6}{\delta})}\right]
        \leq \exp\left[-\frac{2K\alpha_K\log(\frac{6}{\delta})}{2\sum_{k=1}^K I_k}\right]
        \leq \exp\left[-\log(\frac{6}{\delta})\right]=\delta/6.
	\end{align*}
In Lem.~\ref{lem:bound-on-episodes-length}, we have already shown that $\alpha_K$ grows logarithmically with $K$ which proves that $\sum_{k=1}^K \Deltaftwo$ grows sublinearly with $K$ with confidence at least $(1-\delta/6)$.

\subsection{Proof of Lem.~\ref{lem:failed-episodes}}
	Let $F_K\coloneqq\sum_{k=1}^K \Deltas$ and $\lambda_k$ and $\tautk$ denote the hitting times of policy $\Pitk$ in the true and optimistic models, respectively. We define
	$$\Gamma_{k,h}(s_{k,h})=\mathbf 1_{\lambda_k(s_{k,h})>H_k-h}-\prob(\tautk(s_{k,h})>H_k-h).$$
	Note that we have
	\begin{align*}
	F_K&=\sum_{k=1}^K\mathbf 1_{\lambda_k(s_{k,1})>H_k-1}
    =\sum_{k=1}^K\Gamma_{k,1}(s_{k,1})+\sum_{k=1}^K\prob(\tautk(s_{\init})>H_k-1).
	\end{align*}
Let $\ptk'(.|.,.)$ denote the transition probability in the optimistic model $\Mtk'$ (that is the optimistic MDP constructed from $\Mtk$ by connecting states in $B$ into $s_\init$). Similarly, let $p'(.|.,.)$ denote the transition probability in the MDP $\M_\p'$ (that is the MDP constructed from $\M_\p$ by connecting states in $B$ into $s_\init$).
	Since for $1\leq h \leq H_k-1$, $\mathbf 1_{\lambda_k(s_{k,h})>H_k-h}=\mathbf 1_{\lambda_k(s_{k,h+1})>H_k-h-1}$ we have
	\begin{align*}
		\Gamma_{k,h}(s_{k,h})&=\mathbf 1_{\lambda_k(s_{k,h+1})>H_k-h-1}
        -\sum_{y\in S}\ptk'(y\mid s_{k,h},\Pitk(s_{k,h})) \prob(\tautk(y)>H_k-h-1)
        \\&
        \leq \mathbf 1_{\lambda_k(s_{k,h+1})>H_k-h-1}
        -(\sum_{y\in S}\ptk'(y\mid s_{k,h},\Pitk(s_{k,h})) -p'(y\mid s_{k,h},\Pitk(s_{k,h})))\prob(\tautk(y)>H_k-h-1)\\
		&-\sum_{y\in S}p'(y\mid s_{k,h},\Pitk(s_{k,h}))\prob(\tautk(y)>H_k-h-1)\\
		&\leq \mathbf 1_{\lambda_k(s_{k,h+1})>H_k-h-1}+ 2\beta_k(s_{k,h},\Pitk(s_{k,h})) 
        -\sum_{y\in S}p'(y\mid s_{k,h},\Pitk(s_{k,h}))\prob(\tautk(y)>H_k-h-1)\\
		&=
		\Gamma_{k,h+1}(s_{k,h+1})+\psi_{k,h}+ 2\beta_k(s_{k,h},\Pitk(s_{k,h})),
	\end{align*}
where we established the first inequality by adding and subtracting the term $p'(y\mid s_{k,h},\Pitk(s_{k,h})))\prob(\tautk(y)>H_k-h-1)$, the second inequality by using the definition of $\beta_k$ (Eq.~\eqref{eq:trans-conf-bound}), and the last equality by using the following definition
\begin{align*}
\psi_{k,h}&=\prob(\tautk(s_{k,h+1})>H_k-h-1)
-\sum_{y\in S}p'(y\mid s_{k,h},\Pitk(s_{k,h}))\prob(\tautk(y)>H_k-h-1).
\end{align*}
Also, we have
\begin{align*}
	\Gamma_{k,H_k}(s_{k,H_k})&=\mathbf 1_{\lambda_k(s_{k,H_k})>0}-\prob (\tautk(s_{k,H_k}>0))
    =\mathbf 1_{\lambda_k(s_{k,H_k})>0}-\mathbf 1_{\tautk(s_{k,H_k}>0)}
    =\mathbf 1_{s_{k,H_k}\notin G}-\mathbf 1_{s_{k,H_k}\notin G}=0.
	\end{align*}
Using the telescopic sum we get
\begin{align*}
\Gamma_{k,1}(s_{k,1})&=\sum_{h=1}^{H_k-1}(\Gamma_{k,h}(s_{k,h})-\Gamma_{k,h+1}(s_{k,h+1}))+\Gamma_{k,H_k}(s_{k,H_k})
\leq \sum_{h=1}^{H_k-1}\psi_{k,h}+2\sum_{h=1}^{H_k-1}{\beta_k(s_{k,h},\Pitk(s_{k,h}))},
\end{align*}
where the last inequality is achieved by knowing $\Gamma_{k,H_k}(s_{k,H_k})=0$. Therefore, by summing over all episodes we get
\begin{align*}
F_K&\leq \sum_{k=1}^K\sum_{h=1}^{H_k-1}\psi_{k,h}+2\sum_{k=1}^K\sum_{h=1}^{H_k-1}{\beta_k(s_{k,h},\Pitk(s_{k,h}))}
+\sum_{k=1}^K\prob(\tautk(s_{\init})>H_k-1).
\end{align*}
Let $\Gq$ denote the history of all random events up to (and including) step $h$ of episode $k$, i.e., $q=\sum_{k'=1}^{k-1}H_k+h$. We have $\ave(\psi_{k,h}\mid \Gq)=0$, and furthermore $H_k$ is selected at the beginning of episode $k$, and so it is adapted with respect to $\Gq$. Hence $\psi_{k,h}$ is a martingale difference with $|\psi_{k,h}|\leq1$. Therefore, using similar arguments as in proof of Lem.~\ref{lem:bound-over-Deltaf1}, by Azuma-Hoeffding's inequality, we have with probability $1-\frac{2\delta}{3}$
\begin{align*}
	\sum_{k=1}^K\sum_{h=1}^{H_k-1}\psi_{k,h}&\leq 2\sqrt{2\left(\sum_{k=1}^K H_k\right)\log\left(\frac{3(\sum_{k=1}^KH_k)^2}{\delta}\right)}
    \leq 2\sqrt{2K\alpha_K\log\left (\frac{3(K\alpha_K)^2}{\delta}\right )}.
	\end{align*}
Further, in the same vein as proof of Lem.~\ref{lem:bound-over-Deltaf1} we have
\begin{align*}
\sum_{k=1}^K\sum_{h=1}^{H_k-1}{\beta_k(s_{k,h},\Pitk(s_{k,h}))}\leq 2|S|\sqrt{8|A|K\alpha_K\log\left(\frac{2|A|K\alpha_K}{\delta}\right)}.
\end{align*}
Now, we need to bound $\sum_{k=1}^K\prob(\tautk(s_{\init})>H_k-1)$. Using Thm.~2.5.3 in \cite{Latouche:1999}, we have
$$\sum_{k=1}^K\prob(\tautk(s_{\init})>H_k-1)=\sum_{k=1}^K \mathbf 1_{s_\init}\Qt_k^{H_k-1}\mathbf 1,$$
where $\mathbf 1_{s}$ denotes the $|S|-1$-sized one-hot vector at the position of state $s\in S$.
Finally, from Holder's inequality, we have
\begin{align*}
\sum_{k=1}^K&\prob(\tautk(s_{\init})>H_k-1)=\sum_{k=1}^K \mathbf 1_{s_\init}\Qt_k^{H_k-1}\mathbf 1\nonumber\\&\leq \sum_{k=1}^K \|\mathbf 1_{s_\init}\|_1\|\Qt_k^{H_k-1}\mathbf 1\|_\infty\leq \sum_{k=1}^K\|\Qt_k^{H_k-1}\mathbf 1\|_\infty.
\end{align*}
Therefore, by the choice of $H_k=\min\{n>1\mid \|\Qt_k^n\|_\infty\leq \frac{1}{\sqrt{k}}\}$ we get
$$\sum_{k=1}^K\prob(\tautk(s_{\init})>H_k-1)\leq \sum_{k=1}^K\frac{1}{\sqrt{k}}\leq 2\sqrt{K}.$$
Putting everything together yields that inequality \eqref{eq:failed-episodes-bound} holds with probability at least $1-\delta$.
\subsection{Proof of Thm.~\ref{thm:main}}
Let $\mathcal{Y}^{(f,1)}$, $\mathcal{Y}^{(f,2)}$, and $\mathcal{Y}^{(s)}$ denote the events under which Eqs.~\eqref{eq:bound-over-Deltaf1}, \eqref{eq:bound-over-Deltaf2}, and \eqref{eq:failed-episodes-bound} hold, respectively. By Lems.~\ref{lem:bound-over-Deltaf1} and \ref{lem:bound-over-Deltaf2}, we have $\prob(\mathcal{Y}^{(f,1)} \cap \mathcal{Y}^{(f,2)}) \geq 1 - 5\delta/6 - \delta/6 = 1 - \delta$. Moreover, from Lem.~\ref{lem:failed-episodes}, we know that $\prob(\mathcal{Y}^{(s)}) \geq 1 - \delta$. 

Now, let $\mathcal{Y}_r$ denote the event under which inequality~\eqref{eq:regret_bound_main_thm} holds. Since $\mathcal{Y}^{(f,1)} \cap \mathcal{Y}^{(f,2)} \cap \mathcal{Y}^{(s)} \subseteq \mathcal{Y}_r$, it follows that $\prob(\mathcal{Y}_r) \geq 1 - \delta - \delta = 1 - 2\delta$.

\subsection{Proof of Thm.~\ref{thm:ltl_main}}
    Let $\mathcal{Y}_g$ and $\mathcal{Y}_r$ denote the events, under which the learned graph is correct, and the bound over the regret given by Eq.~\eqref{eq:regret_bound_main_thm} holds (after substituting $S, A, \Lambda$ with $\Sprod, \Aprod, \Lambda^{\times}$, respectively). From the results of Thms.~\ref{thm:main} and \ref{thm:sample_complexity}, we have $\mathbb{P}(\mathcal{Y}_g) = \mathbb{P}(\mathcal{Y}_r) \geq 1 - \delta/2$. Then, $$\mathbb{P}(\mathcal{Y}_g \cap\mathcal{Y}_r)=1-\mathbb{P}(\mathcal{Y}_g \cup\mathcal{Y}_r)\geq 1- \delta/2 - \delta/2 = 1-\delta.$$ 
\begin{figure}[t]
\centering
\scalebox{.5}{\includegraphics{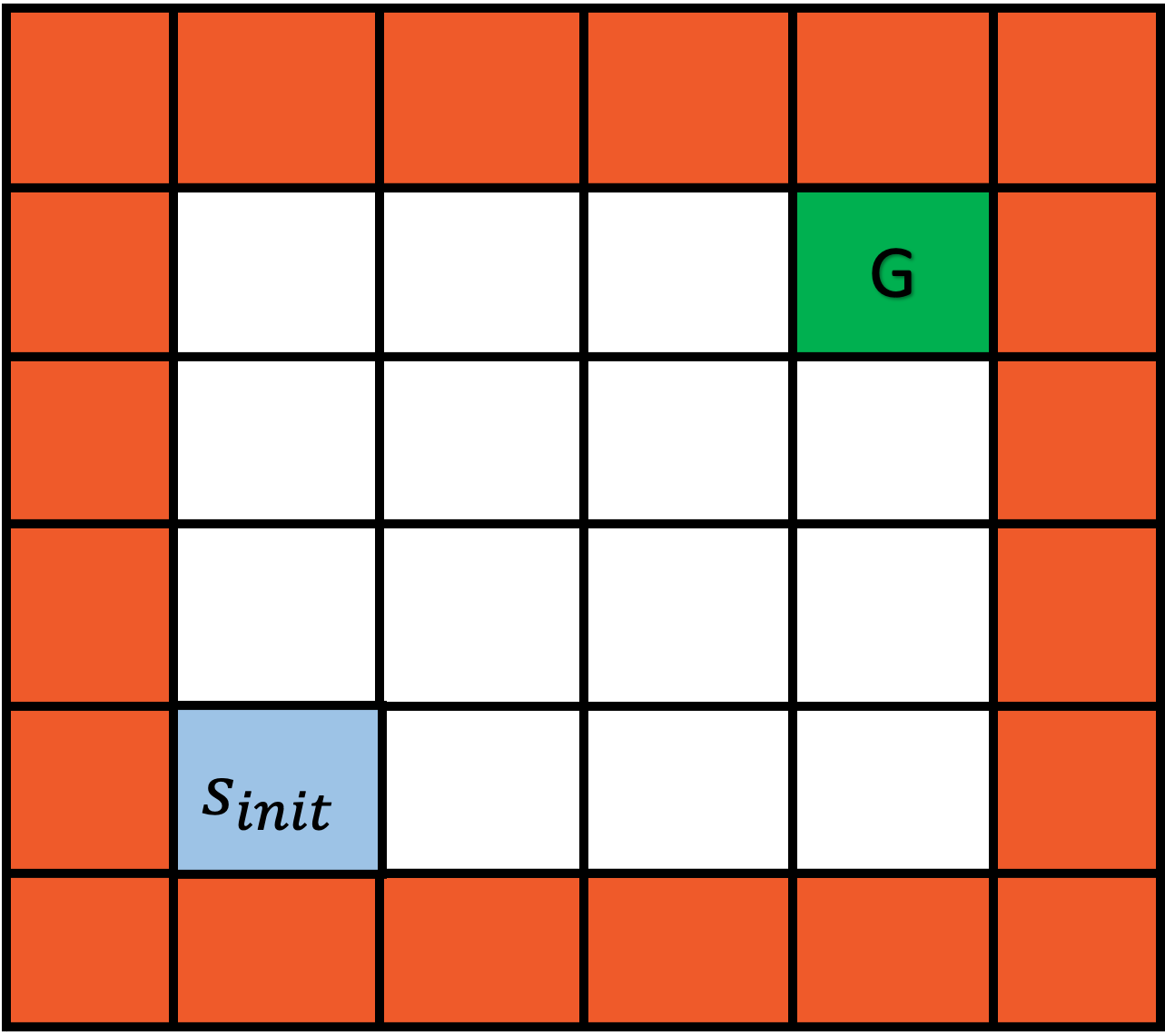}}
\caption{Map of the gridworld example with $l = 6$. The blue and green cells define the initial state ($s_\init$) and target state ($G$), respectively. The red cells correspond to walls. The objective of the agent is to reach $G$ without hitting the walls.}
\label{fig:gridworld}
\end{figure}

\begin{figure}[t]
\begin{center}
\begin{minipage}{0.35\linewidth}
	\includegraphics[scale=.4]{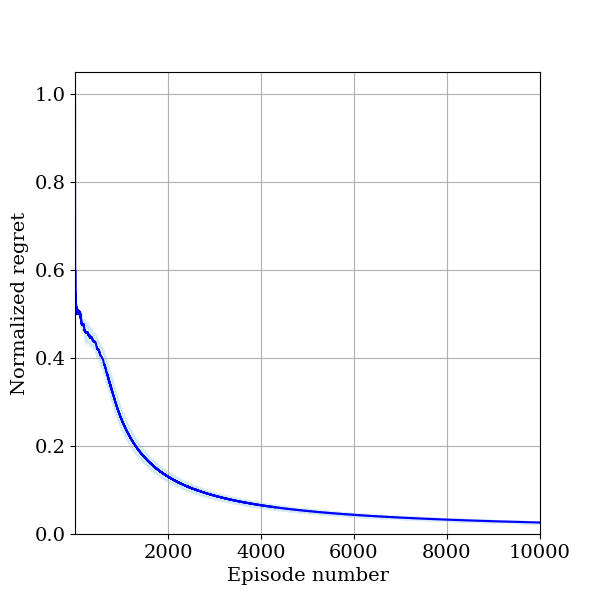}
    \caption*{}
\end{minipage}
\end{center}~
\caption{Variations of the empirical normalized regret, $R(K)/K$, when our proposed algorithm is applied to the gridworld example with $l=6$.
}
\label{fig:regret_growth}
\end{figure}

\begin{figure}
	\centering
	\scalebox{.4}{\includegraphics{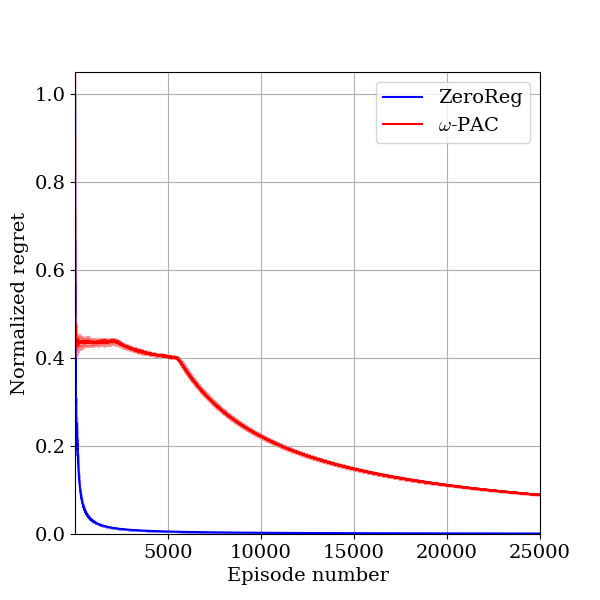}}
	\caption{Comparison of the empirical normalized regret between our proposed regret-free algorithm and the $\omega$-PAC algorithm \cite{Mateo:2023} for the gridworld example with $l=4$.}
	\label{fig:empirical_comp}
\end{figure}

\section{Experimental Evaluation}\label{sec:experiments}
In this section, we evaluate an implementation of our algorithm. The experiments are performed on a laptop with core i7 CPU at 3.10GHz, with 8GB of RAM. We implement Alg.~\ref{alg:main} to obtain the results, assuming that the underlying graph structure is known. This assumption is justified by the fact that, for a fixed system dynamics, Alg.~\ref{alg:graph_learning} needs to be executed only once to learn the corresponding graph with the desired confidence. The resulting graph can then be reused for verifying any LTL specification.

We considered a reach-avoid policy synthesis problem in the gridworld example described in Fig.~\ref{fig:gridworld}. The world is characterized by $l\in\nats_{\geq 4}$ that denotes the number of cells per column and row. The agent can move using the cardinal directions, i.e., $A=\set{\Right, \Left, \up, \down}$. Movement along an intended direction succeeds with probability $0.9$ and fails with probability $0.1$. In case of failure, the agent does not move. Walls are considered to be absorbing, i.e., the agent will not be able to move after hitting a wall. We have conducted experiments to (1) evaluate the empirical performance of our algorithcm, (2) observe how episode length vary throughout the run of our algorithm, and (3) assess the sample complexity of our method.

\textbf{Empirical performance:} Fig.~\ref{fig:regret_growth} illustrates the variations of empirical mean for the normalized regret $R(K)/K$ for our regret-free algorithm which is run for the gridworld example with $l=6$. We set $\delta=0.1$ over $10$ runs. Furthermore, we group all of the cells associated with the wall into an absorbing state $B$, such that we have $|S| = 17$ and $|A| = 4$. The considered specification is $\p = \neg\Unsafe \LTLuntil \Goal$, where $G$ is marked as $\Goal$ and walls are marked as $\Unsafe$. It can be observed that the empirical mean of the regret drops very quickly, which implies that the algorithm successfully finds an optimal policy within the few first episodes. 

We also compare the performance of our proposed method with the $\omega$-PAC algorithm of \cite{Mateo:2023}, which is the only existing method that supports guaranteed policy synthesis against infinite-horizon temporal specifications. The $\omega$-PAC algorithm takes a confidence parameter $\delta\in(0,1)$ and a precision parameter $\varepsilon\in(0,1)$ and provides a policy which has $\varepsilon$-optimal satisfaction probability with confidence $(1-\delta)$. 
  Fig.~\ref{fig:empirical_comp} illustrates that our proposed algorithm converges much faster. We believe that this is because our algorithm uses the intermediate confidence bounds, while the $\omega$-PAC algorithm waits until enough many samples are collected, and only then updates its policy.

\textbf{Episode length variations:} Fig.~\ref{fig:deadline_changes} illustrates the variations in $H_k$ for different episodes. Initially, our algorithm assigns very small values to $H_k$, since the expected time to reach $G$ in the optimistic MDP is small. As the empirical transition probabilities become more precise, the estimation over the expected time to reach $G$ takes more accurate values.

\begin{figure}
	\centering
	\scalebox{.4}{\includegraphics{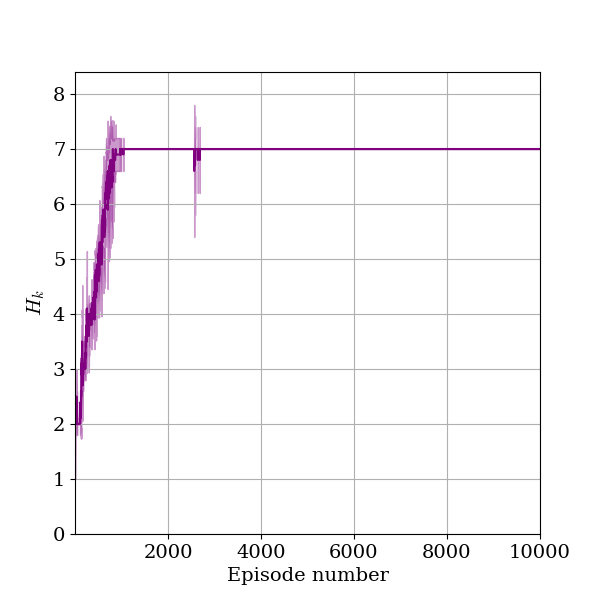}}
	\caption{Variations of the computed deadline, $H_k$, when our proposed algorithm is applied to the gridworld example with $l=6$.
}
	\label{fig:deadline_changes}
\end{figure}
\textbf{Sample complexity:} Although our method and $\omega$-PAC algorithm provide different guarantees, we relate them through definition of a related complexity metric. The sample complexity of the $\omega$-PAC algorithm is characterized with $C$ that is the number of learning episodes with non-$\varepsilon$-optimal satisfaction probability.
 We define $k^\ast_{reg}(\delta,\varepsilon)$ as the smallest number of episodes $k$ for which our regret-free algorithm satisfies $\frac{R(k)}{k}\leq \varepsilon$ with confidence $(1-\delta)$. Furthermore, we define $k^\ast_{PAC}(\delta,\varepsilon)$ as the minimum number of episodes after which the $\omega$-PAC algorithm satisfies $\frac{\C_{k}}{k}\leq \varepsilon$ with confidence $(1-\delta)$. Fig.~\ref{fig:theoretical_comp} illustrates the variations of $k^\ast_{reg}(0.1,0.1)$ and $k^\ast_{PAC}(0.1,0.1)$ for gridworld example with $4\leq l\leq 16$. Note that changes in $l$ influences the size of the state space ($|S| = (l-2)^2+1$) and also the (minimum) $\varepsilon$-recurrence time $T_\varepsilon = (l-2)^2+1$, which is required by the $\omega$-PAC algorithm. Furthermore, we set $p_{min} = 0.01$. It can be observed that our algorithm provides a tighter bound specially for the larger examples.

\begin{figure}
	\centering
	\scalebox{.4}{\includegraphics{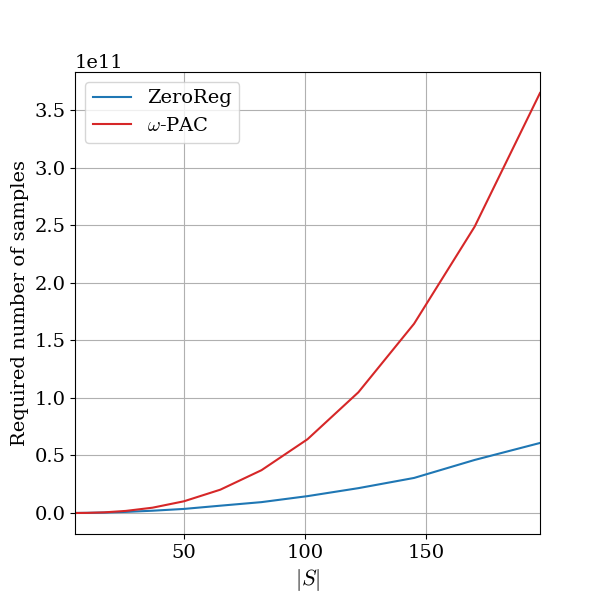}}
	\caption{Comparison of the theoretical sample complexities for our proposed algorithm and the $\omega$-PAC algorithm \cite{Mateo:2023} for the gridworld example with various sizes ($4\leq l \leq 16$).
}
	\label{fig:theoretical_comp}
\end{figure}

\end{document}